\newtheorem{theorem}{Theorem}
\newtheorem{lemma}[theorem]{Lemma}
\theoremstyle{definition}
\newtheorem{definition}{Definition}
\newtheorem{remark}{Remark}
\title{\LARGE \bf
Customize Harmonic Potential Fields via Hybrid\\
Optimization over Homotopic Paths}
\author{Shuaikang Wang, Tiecheng Guo and Meng Guo
  \thanks{The authors are with the College of Engineering,
    Peking University, Beijing 100871, China.
    This work was supported by the National Natural Science Foundation
    of China (NSFC) under grants 62203017, T2121002, U2241214;
    and by the Fundamental Research Funds for the central universities.
    Contact: {\tt\small meng.guo@pku.edu.cn}.}
}
\begin{document}
\maketitle
\thispagestyle{empty}
\pagestyle{empty}


\begin{abstract}
  Safe navigation within a workspace is a fundamental skill
  for autonomous robots to accomplish more complex tasks.
  Harmonic potentials are artificial potential fields that are
  analytical, globally convergent and provably free of local minima.
  Thus, it has been widely used for generating safe and reliable
  robot navigation control policies.
  However, most existing methods do not allow customization of the
  harmonic potential fields nor the resulting paths,
  particularly regarding their topological properties.
  In this paper, we propose a novel method
  that automatically finds homotopy classes of paths
  that can be generated by valid harmonic potential fields.
  The considered complex workspaces can be as general as
  forest worlds consisting of numerous overlapping star-obstacles.
  The method is based on a hybrid optimization algorithm that
  searches over homotopy classes,
  selects the structure of each tree-of-stars within the forest,
  and optimizes over the continuous weight parameters for each purged tree
  via the projected gradient descent.
  The key insight is to transform the forest world
  to the unbounded point world via proper diffeomorphic transformations.
  It not only facilitates a simpler design of the multi-directional D-signature
  between non-homotopic paths,
  but also retain the safety and convergence properties.
  Extensive simulations and hardware experiments are conducted for
  non-trivial scenarios,
  where the navigation potentials are customized
  for desired homotopic properties.
  Project page:~\href{https://shuaikang-wang.github.io/CustFields}{\url{https://shuaikang-wang.github.io/CustFields}}.
\end{abstract}

\section{Introduction}\label{sec:intro}
Safe navigation within a given complex workspace is essential for
autonomous robots to accomplish more tasks,
i.e., to drive the robot from an initial state to the target state
while staying within the allowed workspace and avoiding collision with obstacles.
As a long-standing research field,
many powerful methods have been proposed,
such as optimal control theory~\cite{laumond1998robot},
probabilistic sampling~\cite{lavalle2006planning},
potential fields~\cite{rimon1992exact, vlantis2018robot,
  loizou2017navigation, rousseas2024reactive}
and so on.
These methods have been successfully applied to different dynamic systems
including mobile robots and manipulators,
for various applications such as navigation and manipulation.
Moreover, in many cases, the end-users might have preferences
over the final paths such as their topological properties,
i.e., the sequence of gates between obstacles that the path should (or not) pass.
For instance, a service robot might navigate through as many rooms
during surveillance tasks;
might not pass the meeting room with an ongoing conference;
and should avoid the kitchen when taking the trash out.
However, it remains a challenge to design a navigation policy
that can both ensure safety and convergence,
while accommodating for such user customizations.
\begin{figure}[t!]
  \centering
  \includegraphics[width=0.98\hsize]{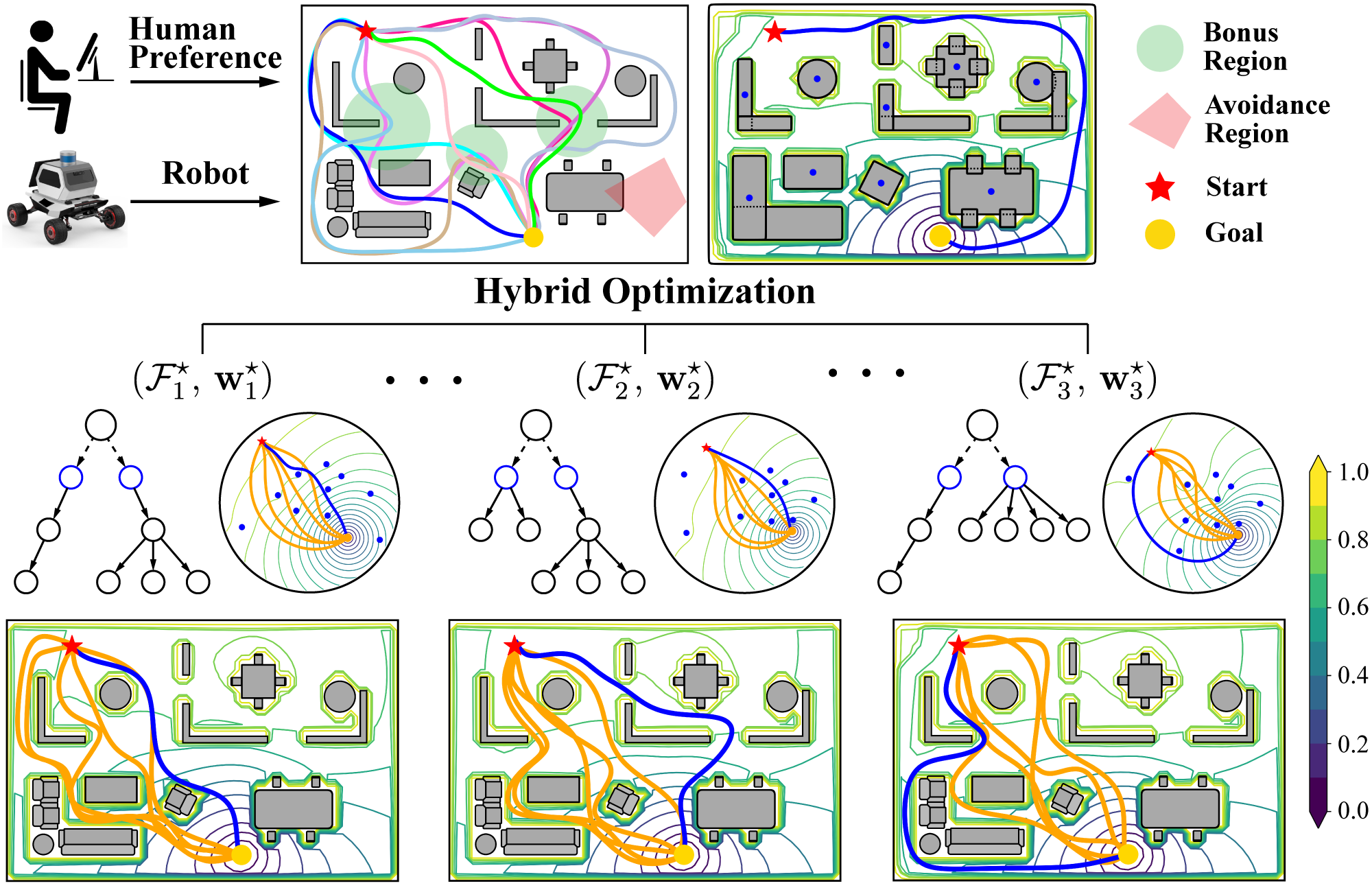}
  \vspace{-0.1in}
  \caption{The proposed hybrid framework for customizing
    harmonic potentials, where the forest structure
    and weight parameters are optimized (\textbf{second row}).
    The resulting paths in the derived potentials
      (\textbf{third row}) can be selected
   given the desired topological properties (\textbf{first figure}),
    compared with the default harmonic potential and the
    resulting path (\textbf{second figure}).}
  \label{fig:hybrid}
  \vspace{-0.15in}
\end{figure}
\subsection{Related Work}\label{subsec:intro-related}
As the most relevant to this work,
navigation functions (NF) pioneered by Rimon and
Koditschek~\cite{rimon1992exact}
provide an intuitive yet powerful framework for motion planning.
Compared to the traditional artificial potential field (APF) methods~\cite{lavalle2006planning, warren1989global, warren1990multiple},
which rely on attraction fields near the goal and repulsion fields near obstacles
but suffer from the local minimum,
the NFs are proven to possess the goal
as the sole global minimum with a set of isolated saddle points of measure zero.
This framework is further expanded to harmonic potential fields
as proposed in~\cite{vlantis2018robot, rousseas2024reactive, loizou2017navigation}.
Via a novel diffeomorphic transformation to \emph{point worlds},
harmonic potentials are constructed more efficiently without parameter tuning,
while retaining safety and global convergence.
These methods have been applied successfully to various dynamic
systems~\cite{loizou2017navigation, dimarogonas2006feedback}
and different workspace or obstacle representations~\cite{vlantis2018robot, li2018navigation}.
Recent work in~\cite{rousseas2022optimal,rousseas2022trajectory}
adopts a wider set of harmonic basis functions to address more complex workspaces,
which however requires solving parameterized optimizations online,
instead of providing analytic solutions.
Partially-unknown workspaces are considered in~\cite{rousseas2022trajectory,
vasilopoulos2022reactive, arslan2019sensor, wang2025hybrid, dahlin2023creating},
via recursive adaptation of the underlying potentials
and paths given online measurements.
The generated potentials have been combined with numerous
feedback controllers~\cite{loizou2017navigation, dahlin2023obstacle} for online execution.
However, most of these methods focus on the convergence and safety
of the resulting paths,
while neglecting how the harmonic potentials can be customized
to generate paths with different topological properties.

On the other hand, homotopic paths are paths that connect the same start and goal points,
and can be continuously deformed into each other,
without leaving the workspace or intersecting any obstacles.
It incorporates the topological properties of the paths within the
obstacle-cluttered workspace, in addition to the commonly-used geometric properties
such as length and smoothness.
It has been widely adopted as a classification of paths in robot
navigation~\cite{zhou2020robust,bhattacharya2012topological}
and object manipulation~\cite{huang2024homotopic}.
Paths within the same homotopy class are considered \emph{close} in the
cluttered environments, and paths in different classes are considered
far apart.
Thus, homotopy distances have been used to measure the similarity between paths,
and further guide the path planning process to generate paths with diverse
homotopic properties, e.g.,
in~\cite{zhou2020robust} for trajectory optimization of aerial vehicles to escape local minima;
in~\cite{bhattacharya2012topological}
for search-based planning to incorporate topological information;
in~\cite{huang2024homotopic} for the generation
of parallel paths to handle deformable objects and connected robotic fleets.
However, there has been limited research on
the customization of potential fields to generate paths
with desired homotopic properties, nor
to find all homotopy classes of paths that can be generated
by valid potential fields.

\subsection{Our Method}\label{subsec:intro-our}
This work tackles the customization of harmonic potential fields,
to generate paths with desired topological properties.
As shown in Fig.~\ref{fig:hybrid}, the proposed hybrid optimization algorithm,
consists of three interleaved layers:
(I) simplifies the representation for homotopy classes from
the forest world to the point world, via diffeomorphic transformation;
(II) selects different structures for each tree-of-stars within the forest world,
including the roots and parent-child relationships;
and (III) optimizes the continuous weight parameters of each purged tree,
via approximated gradient descent over
a multi-directional distance metric, subject to feasibility constraints.
The derived harmonic potential can be used to guide a robot as navigation policies,
by which the resulting paths are validated for safety, convergence and
topological properties.
The homotopy properties of the harmonic potentials can be customized
through user specification of a D-signature or
automatically given the prioritized regions.

Main contribution of this work is threefold:
(I) A new problem as the customization of harmonic potential fields is introduced,
according to the homotopic properties of the resulting paths;
(II) Theoretical results regarding the invariance of homology under diffeomorphic transformation
are derived formally;
(III) The hybrid optimization algorithm is proposed that can automatically
generate valid harmonic potentials for desired homotopy properties.
To our knowledge, {this is the first work} that provides such results.

\section{Preliminaries}\label{sec:preliminary}
\subsection{Diffeomorphic Transformation and Harmonic Potentials}\label{subsec:diff-transform}
Consider a \emph{forest world}~$\mathcal{F}\subset\mathbb{R}^2$,
which has an outer boundary of the workspace~$\mathcal{W} \subset \mathbb{R}^2$
and inner boundaries of~$M$ \emph{tree-of-stars} obstacles~$\mathcal{O}_i\subset \mathbb{R}^2$
for $i\in \mathcal{M}\triangleq \{1,2,\cdots,M\}$.
The forest world $\mathcal{F}$ can be mapped to a point
world~$\mathcal{P}\triangleq \mathbb{R}^2 \backslash \bigcup_{i=1}^M\,P_i$
by a diffeomorphic transformation~$\Phi_{\mathcal{F}\rightarrow \mathcal{P}}:
\mathcal{W} \rightarrow \mathcal{P}$,
such that~$\Phi_{\mathcal{F}\rightarrow \mathcal{P}}(p)\in \mathcal{P}$,
$\forall p \in \mathcal{F}$; and
$\Phi_{\mathcal{F}\rightarrow \mathcal{P}}(o_i)\triangleq P_i$,
$\forall o_i \in \mathcal{O}_i$ and $\forall i\in \mathcal{M}$.

\begin{definition}[Harmonic Potential]\label{def:homotopy-path}
  The \emph{parameterized harmonic potential}
  function~$\phi_{\mathcal{P}}:\mathcal{P}\rightarrow \mathbb{R}^+$
  is defined as:
\begin{equation}\label{eq:harmonic-point-potential}
  \phi_{\mathcal{P}}(q,\,\mathbf{w}) \triangleq w_{\texttt{g}}\, \phi(q,\, q_{\texttt{g}})
  - \sum_{i=1}^M w_{i}\,\phi(q,\, P_i),
\end{equation}
where~$\phi:\mathbb{R}^2 \times \mathbb{R}^2\rightarrow \mathbb{R}$
is the harmonic term:
$\phi(q,\,P) \triangleq \ln\left(\|q-P\|^2\right)$
for $q,\,P\in \mathbb{R}^2$;
$\phi(q,\, q_{\texttt{g}})$ is the potential for the
goal~$q_{\texttt{g}}\in \mathcal{P}$ and $\phi(q,\, P_i)$ for the
point~$P_i\in \mathbb{R}^2$ and~$i\in \mathcal{M}$;
$\mathbf{w}\triangleq [w_{\texttt{g}},w_1,\cdots,w_M]^\intercal > 0$
are weight parameters.
\hfill $\blacksquare$
\end{definition}

Then, the harmonic potentials within the actual forest
world~$\mathcal{F}$ can be constructed as:
\begin{equation}\label{eq:complete-nf}
  \varphi_{\mathcal{F}}(p) \triangleq \sigma \circ \phi_{\mathcal{P}}
  \circ \Phi_{\mathcal{F}\rightarrow \mathcal{P}}(p)
\end{equation}
where $\Phi_{\mathcal{F}\rightarrow \mathcal{P}}$ and $\phi_{\mathcal{P}}$ are defined above,
and~$\sigma(x)\triangleq \frac{e^x}{e^x+1}$ for~$x\in \mathbb{R}$
maps the unbounded range of~$\phi_{\mathcal{P}}$ to a finite interval~$[0,\,1]$.
The exact definition and detailed derivations for the transformation
in~\eqref{eq:complete-nf} are omitted here.
The readers are referred
to~\cite{rimon1992exact, loizou2017navigation, loizou2022mobile}
and our previous work~\cite{wang2025hybrid}.

\begin{definition}[Resulting Forest Path]\label{def:ass-path}
Given the harmonic potentials~$\varphi_\mathcal{F}(p)$ in $\mathcal{F}$,
its \emph{resulting forest path} from an initial point~$p_{\texttt{s}}\in \mathcal{W}_0$
is derived by following its negated gradient~$-\nabla_{p}\varphi_\mathcal{F}$,
i.e., ${\boldsymbol{\tau}}(t) \triangleq p_{\texttt{s}} - \int_{0}^{t} \nabla_p\varphi_\mathcal{F} ({\boldsymbol{\tau}}(\xi))\, d\xi$.
\hfill $\blacksquare$
\end{definition}
Similar definitions apply to the resulting path of the harmonic
potential~$\phi_{\mathcal{P}}(q)$ in~\eqref{eq:harmonic-point-potential},
denoted by~$\widetilde{\boldsymbol{\tau}}(t)$.
{Without loss of generality, the infinite time interval $t \in [0, +\infty)$
for both $\boldsymbol{\tau}(t)$ and $\widetilde{\boldsymbol{\tau}}(t)$
are mapped to $[0,\,1]$.}

\begin{figure}[t]
  \centering
`  \includegraphics[width=0.9\hsize]{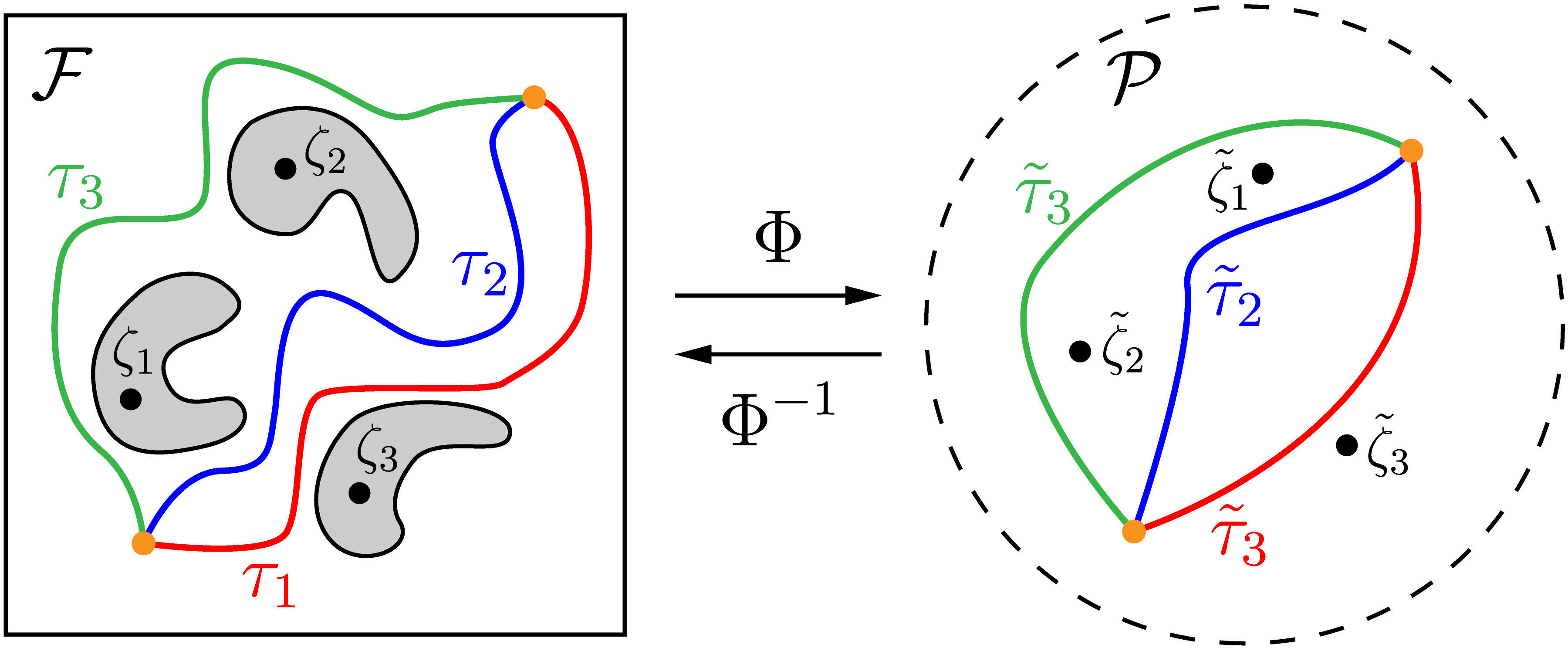}
  \vspace{-0.1in}
  \caption{\textbf{Left:} $\boldsymbol{\tau}_1$ and $\boldsymbol{\tau}_2$ belong to the same homotopy class,
  but $\boldsymbol{\tau}_3$ belongs to a different homotopy in forest world~$\mathcal{F}$;
  \textbf{Right:} homotopy classes are kept in point world~$\mathcal{P}$
  after the diffeomorphic transformation~$\Phi_{\mathcal{F}\rightarrow \mathcal{P}}$.
  }\label{fig:diffeo_homoto}
  \vspace{-0.15in}
\end{figure}

\subsection{Homotopy and Homology Classes of Paths}\label{subsec:homotopic}
Homotopy and homology classes are common tools
to distinguish different paths within the same workspace,
see~\cite{bhattacharya2012topological, bhattacharya2015persistent}.
Compared to homotopy, homology offers a computationally more tractable alternative.

\begin{definition}[Homotopic Paths]\label{def:homotopy-path}
Given two paths~$\boldsymbol{\tau}_1,\boldsymbol{\tau}_2:[0,\,1] \rightarrow \mathbb{R}^2$
that $\boldsymbol{\tau}_1(0)=\boldsymbol{\tau}_2(0)$ and $\boldsymbol{\tau}_1(1)=\boldsymbol{\tau}_2(1)$,
then~$\boldsymbol{\tau}_1$ and $\boldsymbol{\tau}_2$ are \emph{homotopic}
if there is a continuous map $\chi:[0,\, 1]\times[0,\, 1]\rightarrow\mathbb{R}^2$
that $\chi(\lambda,\, 0)=\boldsymbol{\tau}_1(\lambda)$,
$\forall \lambda \in [0,\, 1]$ and $\chi(\mu,\,1)=\boldsymbol{\tau}_2(\mu)$,
$\forall \mu \in [0,\, 1]$.
\hfill $\blacksquare$
\end{definition}
\begin{definition}[Homologous Paths]\label{def:homology-path}
Given two paths~$\boldsymbol{\tau}_1,\boldsymbol{\tau}_2:[0,\,1] \rightarrow \mathbb{R}^2$
that $\boldsymbol{\tau}_1(0)=\boldsymbol{\tau}_2(0)$ and $\boldsymbol{\tau}_1(1)=\boldsymbol{\tau}_2(1)$,
then $\boldsymbol{\tau}_1$ and $\boldsymbol{\tau}_2$ are \emph{homologous}
if there are no obstacles within the closed
loop~$L_{12} \triangleq(\boldsymbol{\tau}_1 \sqcup -\boldsymbol{\tau}_2)$,
i.e., $\texttt{Int}(L_{12}) \bigcap \mathcal{O}_i =\emptyset$, $\forall i \in \mathcal{M}$,
where~$\texttt{Int}(\cdot)$ is the interior of a closed-loop.
 \hfill $\blacksquare$
\end{definition}
Homotopy indicates that two paths $\boldsymbol{\tau}_1$ and $\boldsymbol{\tau}_2$
can be continuously deformed into the other without intersecting any obstacle,
whereas homology implies that the closed loop formed by $\boldsymbol{\tau}_1$ and $\boldsymbol{\tau}_2$
does not enclose any obstacle.
It has been proven in~\cite{bhattacharya2012topological} that homotopic paths
are also homologous since there always exists a homomorphism map from
the homotopy groups to the homology groups.
Besides, various  measures have been proposed
to classify paths into homotopy classes, e.g.,
H-signature~\cite{bhattacharya2012topological} and
visibility sequence~\cite{schmitzberger2002capture}.

\section{Problem Description}\label{sec:problem}
Consider a forest world~$\mathcal{F}$ as defined earlier
within a connected and compact workspace~$\mathcal{W}\subset \mathbb{R}^2$.
It contains~$M$ non-overlapping internal obstacles~$\mathcal{O}_i\subset {\mathcal{W}}_0$,
$\forall i\in \mathcal{M}$.
Specifically, each obstacle $\mathcal{O}_i$ has a tree-of-stars structure
composed of several star-shaped obstacles~\cite{rimon1992exact,
  loizou2017navigation, loizou2022mobile}.
A particular type of star-shaped obstacle called \emph{squircle}
is considered in this work.
As shown in Fig.~\ref{fig:hybrid}, a squircle interpolates smoothly between a circle and a square,
while avoiding non-differentiable corners~\cite{li2018navigation}, i.e.,
$O(p)\triangleq
\{ p \in \mathbb{R}^2\,|\, \beta(p)=\big(\|p\|^2+\sqrt{\|p\|^4-4\kappa^2\,
[(p^\intercal e_1)(p^\intercal e_2)]^2}\big)/2-1 \leq 0 \},$
where~$\kappa\in (0,\,1)$ is a positive parameter,
$\|\cdot\|$ is the Euclidean norm,
and $e_1, e_2$ are two base vectors in~$\mathbb{R}^2$.
Non-unit squircles can be derived by scaling, translation and rotation,
i.e., $O_{n_i}\triangleq \texttt{SQ}\big(c_{n_i},\,
w_{n_i},\,h_{n_i},\, \theta_{n_i}\big)$,
where~$c_{n_i}\in \mathbb{R}^{2}$ is the center;
$w_{n_i},\, h_{n_i}>0$ are the width and height;
and $\theta_{n_i}\in (-\pi,\, \pi]$ is the orientation,
$\forall n_i \in \{1,2, \cdots, N_i\} \triangleq \mathcal{N}_i$,
where~$N_i$ is the depth of the tree in~$\mathcal{O}_i$.

Given the start and goal~$p_{\texttt{s}},p_{\texttt{g}} \in \mathcal{F}$,
the objective is \emph{threefold}:
(I) construct valid harmonic potentials~$\varphi_{\mathcal{F}}(p)$ as
defined in~\eqref{eq:complete-nf};
(II) find potentials that the resulting path
belongs to a desired homotopy class;
and (III) determine all homotopy classes of paths that result from
valid potentials.


\section{Proposed Solution}\label{sec:solution}
The proposed solution consists of two main components:
(I) a simplified representation for homotopic paths in the point world,
via diffeomorphic transformation;
(II) a hybrid optimization algorithm to customize the harmonic potentials
in the point world, given a desired homotopy class.
\subsection{Homotopy Classes in Forest World}\label{subsec:Consistency}
\subsubsection{Invariance of Homology under Diffeomorphism}\label{subsubsec:consistency}
Via the diffeomorphic transformation, the forest world $\mathcal{F}$ can be mapped into a point world $\mathcal{P}$,
which has a much simpler geometry.
Thus, the complexity of finding homotopic paths can be reduced in  $\mathcal{P}$,
as shown in Fig.~\ref{fig:diffeo_homoto}.
For correctness, it is proven formally below that the homology properties of a
path remain unchanged after the transformation.
{Without loss of generality, both $\mathbb{R}^2$ and $\mathbb{C}$ represent the 2D plane interchangeably.}
Then, the obstacle marker function and H-signature function are defined
following~\cite{bhattacharya2012topological}:
\begin{definition}[Obstacle Marker]\label{def:obstacle-marker-function}
Given $M$ obstacles $\mathcal{O}_1, \mathcal{O}_2, \cdots, \mathcal{O}_M$ in the 2D complex plane $\mathbb{C}$,
the \emph{obstacle marker function} $\Gamma(z):\mathbb{C}\rightarrow \mathbb{C}^M$ is defined as:
\begin{equation}\label{eq:obstacle-marker-function}
  \Gamma(z) \triangleq
  [\frac{\gamma_1(z)}{z-\zeta_1}, \cdots, \frac{\gamma_M(z)}{z-\zeta_M}]^\intercal,
\end{equation}
where~$\gamma_\ell:\mathbb{C}\rightarrow \mathbb{C}$ is analytic;
$\zeta_\ell\in \mathcal{O}_\ell $ is any marker point within each obstacle $\mathcal{O}_\ell$;
and~$\gamma_\ell(\zeta_\ell) \neq 0$, $\forall \ell\in \mathcal{M}$. \hfill $\blacksquare$
\end{definition}
\begin{definition}[H-signature]\label{def:h-signature}
Given $M$ obstacles in the 2D complex plane $\mathbb{C}$,
the \emph{H-signature} of the path~$\boldsymbol{\tau} \in \mathcal{C}$
is defined as the vector function~$\mathcal{H}_2:
\mathcal{C} \rightarrow \mathbb{C}^M$ that:
\begin{equation}\label{eq:h-signature}
  \mathcal{H}_2(\boldsymbol{\tau}) \triangleq \int_{\boldsymbol{\tau}} \Gamma(z) dz,
\end{equation}
where~$\mathcal{C}$ is the set of all paths in $\mathbb{C}$;
$\Gamma(z)$ is the obstacle marker function from Def.~\ref{def:obstacle-marker-function}.
\hfill $\blacksquare$
\end{definition}
Given a smooth path~$\boldsymbol{\tau}$ within the forest world~$\mathcal{F}$,
the \emph{associated} path in point world~$\mathcal{P}$
{after} the transformation~$\Phi_{\mathcal{F}\rightarrow \mathcal{P}}$ from the forest world to the point world is given by:
$\widetilde{\boldsymbol{\tau}} \triangleq
\Phi(\boldsymbol{\tau}) =
\big\{\Phi_{\mathcal{F}\rightarrow \mathcal{P}}(p)\,|\, p \in \boldsymbol{\tau}\big\}$,
which is also smooth.
The details of the transformation $\Phi(\cdot)$ is given in Sec.~\ref{subsec:hybrid}.
The following lemma shows that the homology is invariant under
diffeomorphic transformations.

\begin{lemma}\label{lemma:homology-consistency}
Consider two paths $\boldsymbol{\tau}_1, \boldsymbol{\tau}_2 \subset \mathcal{F}$ that
$\boldsymbol{\tau}_1(0)=\boldsymbol{\tau}_2(0)$ and $\boldsymbol{\tau}_1(1)=\boldsymbol{\tau}_2(1)$,
of which the associated paths after the diffeomorphic transformation~$\Phi_{\mathcal{F}\rightarrow \mathcal{P}}$
in the point world $\mathcal{P}$
are $\widetilde{\boldsymbol{\tau}}_1$ and $\widetilde{\boldsymbol{\tau}}_2$.
Then, $\boldsymbol{\tau}_1$ and $\boldsymbol{\tau}_2$ are homologous
if and only if~$\widetilde{\boldsymbol{\tau}}_1$ and $\widetilde{\boldsymbol{\tau}}_2$ are homologous.
\end{lemma}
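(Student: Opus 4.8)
The plan is to prove both directions of the equivalence by relating the homology relation between two paths to a purely topological fact about diffeomorphisms: a diffeomorphism $\Phi_{\mathcal{F}\to\mathcal{P}}$ maps obstacles to obstacles and maps the closed loop $L_{12}=\boldsymbol{\tau}_1\sqcup-\boldsymbol{\tau}_2$ to the closed loop $\widetilde L_{12}=\widetilde{\boldsymbol{\tau}}_1\sqcup-\widetilde{\boldsymbol{\tau}}_2$, while preserving the notion of ``enclosing'' an obstacle. By Definition~\ref{def:homology-path}, $\boldsymbol{\tau}_1$ and $\boldsymbol{\tau}_2$ are homologous iff $\texttt{Int}(L_{12})\cap\mathcal{O}_i=\emptyset$ for every $i\in\mathcal{M}$, so it suffices to show that $\texttt{Int}(L_{12})\cap\mathcal{O}_i=\emptyset$ for all $i$ if and only if $\texttt{Int}(\widetilde L_{12})\cap P_i=\emptyset$ for all $i$, where $P_i=\Phi_{\mathcal{F}\to\mathcal{P}}(\mathcal{O}_i)$.

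First I would set up the bookkeeping. Since $\Phi_{\mathcal{F}\to\mathcal{P}}$ is a diffeomorphism of the workspace with $\Phi_{\mathcal{F}\to\mathcal{P}}(\mathcal{O}_i)=P_i$ and $\Phi_{\mathcal{F}\to\mathcal{P}}(\boldsymbol{\tau}_j)=\widetilde{\boldsymbol{\tau}}_j$, and since the two paths share endpoints, the image of the concatenated loop is exactly $\widetilde L_{12}$; being a homeomorphism onto its image, $\Phi_{\mathcal{F}\to\mathcal{P}}$ carries the interior of $L_{12}$ onto the interior of $\widetilde L_{12}$ (using invariance of domain / the Jordan–Schoenflies picture in the plane). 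Hence $p\in\texttt{Int}(L_{12})\cap\mathcal{O}_i$ iff $\Phi_{\mathcal{F}\to\mathcal{P}}(p)\in\texttt{Int}(\widetilde L_{12})\cap P_i$, which gives the emptiness equivalence obstacle-by-obstacle, and therefore the lemma. Alternatively, and perhaps cleaner for a paper that already invokes the H-signature machinery of~\cite{bhattacharya2012topological}, I would argue via the H-signature: two paths are homologous iff $\mathcal{H}_2(\boldsymbol{\tau}_1)=\mathcal{H}_2(\boldsymbol{\tau}_2)$, equivalently $\mathcal{H}_2(L_{12})=0$, i.e.\ each component of the contour integral $\oint_{L_{12}}\gamma_\ell(z)/(z-\zeta_\ell)\,dz$ vanishes, which by the residue theorem is $2\pi i\,\gamma_\ell(\zeta_\ell)$ times the winding number of $L_{12}$ about $\zeta_\ell$. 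Choosing marker points $\zeta_\ell\in\mathcal{O}_\ell$ in the forest world and $\widetilde\zeta_\ell=\Phi_{\mathcal{F}\to\mathcal{P}}(\zeta_\ell)\in P_\ell$ in the point world, the claim reduces to showing that the winding number of $L_{12}$ about $\zeta_\ell$ equals the winding number of $\widetilde L_{12}$ about $\widetilde\zeta_\ell$.

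The technical core — and the step I expect to be the main obstacle — is this invariance of the winding number (equivalently, of the enclosure relation) under the diffeomorphism. The subtlety is that $\Phi_{\mathcal{F}\to\mathcal{P}}$ is only defined on the workspace $\mathcal{W}$, which is the compact forest world with the obstacles as holes, not on all of $\mathbb{R}^2$; so one cannot naively push forward a bounding disk. I would handle this by noting that a loop not passing through $\zeta_\ell$ winds around $\zeta_\ell$ iff $\zeta_\ell$ lies in a bounded complementary component of the loop that the loop ``positively encircles'', a property that is invariant under orientation-preserving homeomorphisms of a neighborhood of the loop together with $\zeta_\ell$; since $\Phi_{\mathcal{F}\to\mathcal{P}}$ restricted to any small annular neighborhood of $\boldsymbol{\tau}_j$ punctured near $\mathcal{O}_\ell$ is such a homeomorphism and is orientation-preserving (its Jacobian determinant has constant sign on the connected $\mathcal{F}$), the winding numbers agree. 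One must also confirm that $\Phi_{\mathcal{F}\to\mathcal{P}}$ is indeed orientation-preserving and that the marker functions $\gamma_\ell$ can be taken to match up (any nonvanishing analytic choice works, so set $\gamma_\ell\equiv 1$ on both sides), after which the residue computation closes the argument. I would then state the ``only if'' and ``if'' directions as immediate consequences of the two-sided equivalence $\mathcal{H}_2(L_{12})=0\iff\mathcal{H}_2(\widetilde L_{12})=0$.
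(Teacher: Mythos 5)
Your proposal is correct, but it reaches the conclusion by a genuinely different route than the paper. The paper works entirely inside the H-signature integral: it pulls $\int_{\widetilde L_{12}}\Gamma(\widetilde z)\,d\widetilde z$ back to the forest world via the substitution $\widetilde z=\Phi(z)$ and then applies the Cauchy integral theorem to the pulled-back integrand, which crucially requires asserting that $\Phi$ is \emph{complex-analytic} so that $d\widetilde z=\nabla_z\Phi(z)\,dz$ and $\gamma_\ell(\Phi(z))\nabla_z\Phi(z)/(\Phi(z)-\Phi(\zeta_\ell))$ is holomorphic on $\texttt{Int}(L_{12})$. Your second argument instead reduces homology-triviality to the vanishing of winding numbers and observes that these are preserved because $\Phi$ induces an isomorphism $H_1(\mathcal{F})\cong H_1(\mathcal{P})$ carrying the generator around $\mathcal{O}_\ell$ to (plus or minus) the generator around $P_\ell$; this uses only that $\Phi$ is a homeomorphism of the free space whose continuous extension collapses $\mathcal{O}_\ell$ to $P_\ell$, and for the ``zero iff zero'' conclusion you do not even need the orientation-preservation you spend effort on. What this buys is robustness: the purging, star-to-sphere and sphere-to-point maps composing $\Phi_{\mathcal{F}\rightarrow\mathcal{P}}$ in~\eqref{eq:purge} are smooth diffeomorphisms but not conformal, so your topological argument applies where the paper's appeal to analyticity of $\Phi$ is a nontrivial extra hypothesis. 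One caution: your first sub-route (pushing $\texttt{Int}(L_{12})$ forward pointwise and invoking Jordan--Schoenflies) does not quite work as stated, since $\Phi$ is not injective on any obstacle enclosed by the loop (each $\mathcal{O}_i$ is crushed to the single point $P_i$) and $L_{12}$ need not be a Jordan curve; the winding-number/$H_1$ version you give afterwards is the one to keep, reading ``no obstacle in $\texttt{Int}(L_{12})$'' as the vanishing of all winding numbers per Lemma~2 of~\cite{bhattacharya2012topological}, exactly as the paper implicitly does.
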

\begin{proof}
By definition, the marker points~$\zeta_\ell \in \mathcal{F}$
and~$\widetilde{\zeta}_\ell \in \mathcal{P}$ are related by
$\widetilde{\zeta}_\ell = \Phi(\zeta_\ell)$, $\forall \ell \in \mathcal{M}$.
Assume that~$\boldsymbol{\tau}_1$ and $\boldsymbol{\tau}_2$ are homologous paths,
but $\widetilde{\boldsymbol{\tau}}_1$ and $\widetilde{\boldsymbol{\tau}}_2$ are not homologous.
According to Lemma~2 in~\cite{bhattacharya2012topological},
the following holds:
\begin{equation}\label{eq:h-homologous}
  \mathcal{H}_2(\widetilde{\boldsymbol{\tau}}_1) - \mathcal{H}_2(\widetilde{\boldsymbol{\tau}}_2)
  = \int_{\widetilde{L}_{12}}
  \Gamma(\widetilde{z})\, d\widetilde{z} \neq \mathbf{0},
\end{equation}
where~$\widetilde{L}_{12}\triangleq (\widetilde{\boldsymbol{\tau}}_1 \sqcup -\widetilde{\boldsymbol{\tau}}_2)$;
and~$\Gamma(\widetilde{z})$ is defined in~\eqref{eq:obstacle-marker-function}.
Then,
since~$\widetilde{\boldsymbol{\tau}}_1 = \Phi(\boldsymbol{\tau}_1)$ and $\widetilde{\boldsymbol{\tau}}_2 = \Phi(\boldsymbol{\tau}_2)$ holds,
it follows that:
\begin{equation*}
\begin{aligned}
  & \int_{\widetilde{L}_{12}} \Gamma(\widetilde{z}) \, d\widetilde{z}
    = \int_{\widetilde{L}_{12}} \big[\frac{\gamma_1(\widetilde{z})}{\widetilde{z}
    -\widetilde{\zeta}_1},\cdots, \frac{\gamma_M(\widetilde{z})}{\widetilde{z}
    -\widetilde{\zeta}_M}\big]^\intercal \, d\widetilde{z}\\
  &= \int_{L_{12}} \big[\frac{\gamma_1(\Phi(z)) \nabla_z\Phi(z)}{\Phi(z)-\Phi(\zeta_1)},
    \cdots, \frac{\gamma_M(\Phi(z))\nabla_z\Phi(z)}{\Phi(z)-\Phi(\zeta_M)}\big]^\intercal dz,
\end{aligned}
\end{equation*}
where~$L_{12}\triangleq ({\boldsymbol{\tau}}_1 \sqcup - {\boldsymbol{\tau}}_2)$,
and~$\nabla_z\Phi(z)$ is the Jacobian.
Since~$\Phi(z)$ is analytic, $\nabla \Phi(z)$ is non-singular
and $\zeta_\ell \notin \texttt{Int}(L_{12})$, $\forall \ell \in \mathcal{M}$.
It in turn indicates that~$\frac{\gamma_\ell(\Phi(z)) \nabla\Phi(z)}{\Phi(z)-\Phi(\zeta_\ell)}$
is analytic, for all $z \in \texttt{Int}(L_{12})$ and $\ell \in\mathcal{M}$.
By the Cauchy Integral Theorem~\cite{zwillinger2021handbook},
the above integral equals to~$\mathbf{0}$,
which contradicts the assumption in~\eqref{eq:h-homologous}.
Therefore, $\widetilde{\boldsymbol{\tau}}_1$ and $\widetilde{\boldsymbol{\tau}}_2$ are homologous.
Given that~$\Phi$ is diffeomorphic,
the sufficiency can be proven similarly.
\end{proof}

\subsubsection{D-Signature}\label{subsubsec:metric}
The H-signature of a given path in the forest world
by~\eqref{eq:h-signature} requires integration over the entire path,
which is computationally expensive for general harmonic potentials.
Thus, a multi-directional distance signature is introduced to
distinguish homology classes more effectively in the point world.
Consider a point world~$\mathcal{P}$ with~$M$ point obstacles~$P_1, P_2, \cdots, P_M$.
The path~$\widetilde{\boldsymbol{\tau}}$ with $\widetilde{\boldsymbol{\tau}}(0)
= q_{\texttt{s}}$ and $\widetilde{\boldsymbol{\tau}}(1) = q_{\texttt{g}}$
has a D-signature defined below.
\begin{definition}[D-Signature]\label{def:multi-directional-distance}
The \emph{D-signature} for path~$\widetilde{\boldsymbol{\tau}}$
within the point world~$\mathcal{P}$ is given by:
\begin{equation}\label{eq:distance-metric}
  D(\widetilde{\boldsymbol{\tau}}) \triangleq S(\widetilde{\boldsymbol{\tau}})\, \odot\, \overline{D}(\widetilde{\boldsymbol{\tau}}),
\end{equation}
where the signed distance~$D(\widetilde{\boldsymbol{\tau}})\in \mathbb{R}^M$;
the sign~$S(\widetilde{\boldsymbol{\tau}}) \in \{+1,\, -1\}^M$;
the distance~$\overline{D}(\widetilde{\boldsymbol{\tau}})\in \mathbb{R}_{+}^M$;
and the symbol $\odot$ denotes the Hadamard product~\cite{boyd2004convex}.
\hfill $\blacksquare$
\end{definition}

As illustrated in Fig.~\ref{fig:multi-distance},
the D-signature~$D(\widetilde{\boldsymbol{\tau}})$ can
be computed in three steps:
(I) A circular area that covers all the point obstacles,
as well as the start and goal points,
is computed, denoted by~$C(\widetilde{\boldsymbol{\tau}})$,
centered at~$Q_{\texttt{c}}\triangleq (q_{\texttt{s}} + q_{\texttt{g}})/2$
and a radius~$R_{\texttt{c}}$.
Namely, it holds that~$\widetilde{\boldsymbol{\tau}} \subset \texttt{Int}(C)$
and~$P_i\in \texttt{Int}(C)$, $\forall i\in \mathcal{M}$;
(II) Denote by $q_{\texttt{g}}^{\prime}$ the intersections of the ray from $q_{\texttt{s}}$ to $q_{\texttt{g}}$
on the boundary~$\partial C$,
similarly~$q_{\texttt{s}}^{\prime}$ for~$q_{\texttt{s}}$.
Thus, the interior~$\texttt{Int}(C)$ is divided into two
parts:
\begin{equation}\label{eq:semi-areas}
\begin{aligned}
  C^{+}(\widetilde{\boldsymbol{\tau}}) &\triangleq
\texttt{Int}\Big(\widetilde{\boldsymbol{\tau}} \sqcup \, \overrightarrow{q_{\texttt{g}} q_{\texttt{g}}^{\prime}}
                \sqcup \, \wideparen{q_{\texttt{g}}^{\prime} q_{\texttt{s}}^{\prime}}
                \sqcup \, \overrightarrow{q_{\texttt{s}} q_{\texttt{s}}^{\prime}}\Big); \\
                C^{-}(\widetilde{\boldsymbol{\tau}}) &\triangleq C(\widetilde{\boldsymbol{\tau}})
                \backslash C^{+}(\widetilde{\boldsymbol{\tau}}),
\end{aligned}
\end{equation}
where $C^{+}(\widetilde{\boldsymbol{\tau}})$ is the area enclosed
by the \emph{anticlockwise} loop formed with the path~$\widetilde{\boldsymbol{\tau}}$
from $q_{\texttt{s}}$ to $q_{\texttt{g}}$,
the line~$\overrightarrow{q_{\texttt{g}} q_{\texttt{g}}^{\prime}}$ from $q_{\texttt{g}}$ to $q_{\texttt{g}}^{\prime}$,
the arc $\wideparen{q_{\texttt{g}}^{\prime} q_{\texttt{s}}^{\prime}}$ from $q_{\texttt{g}}^{\prime}$
to $q_{\texttt{s}}^{\prime}$,
and the line $\overrightarrow{q_{\texttt{s}} q_{\texttt{s}}^{\prime}}$ from $q_{\texttt{s}}$ to $q_{\texttt{s}}^{\prime}$;
and $C^{-}(\widetilde{\boldsymbol{\tau}})$ is the complement
of $C^{+}(\widetilde{\boldsymbol{\tau}})$ within~$C$; (III) The sign~$S(\widetilde{\boldsymbol{\tau}})\triangleq [s_i]$ is given by~$s_i=1$
if~$P_i \in C^{+}(\widetilde{\boldsymbol{\tau}})$,
while~$s_i=-1$ if~$P_i \in C^{-}(\widetilde{\boldsymbol{\tau}})$, $\forall i \in\mathcal{M}$.
Moreover, the distance~$\overline{D}(\widetilde{\boldsymbol{\tau}})\triangleq [\overline{d}_i]$ is given
by~$\overline{d}_i = \textbf{min}_{q \in \widetilde{\boldsymbol{\tau}}} \|q - P_i\|$, $\forall i \in\mathcal{M}$.

\begin{figure}[t]
  \centering
  \includegraphics[width=0.85\hsize]{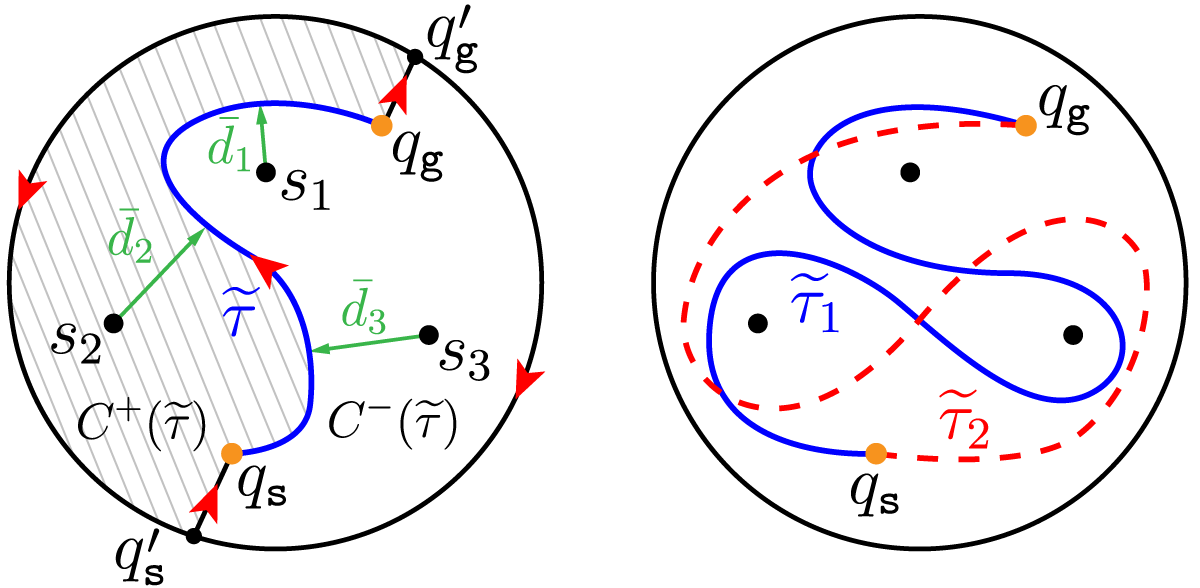}
  \vspace{-0.1in}
  \caption{\textbf{Left:} computation of the D-signature $D(\widetilde{\boldsymbol{\tau}})$ in~\eqref{eq:distance-metric};
  \textbf{Right:} $\widetilde{\boldsymbol{\tau}}_1$ and $\widetilde{\boldsymbol{\tau}}_2$ are homologous, but not homotopic.
  }\label{fig:multi-distance}
  \vspace{-0.17in}
\end{figure}

\begin{lemma}\label{lemma:homology-metric}
Consider two paths $\widetilde{\boldsymbol{\tau}}_1, \widetilde{\boldsymbol{\tau}}_2 \subset \mathcal{P}$ that
$\widetilde{\boldsymbol{\tau}}_1(0) = \widetilde{\boldsymbol{\tau}}_2(0) = q_{\texttt{s}}$
and $\widetilde{\boldsymbol{\tau}}_1(1) = \widetilde{\boldsymbol{\tau}}_2(1) = q_{\texttt{g}}$,
then $\widetilde{\boldsymbol{\tau}}_1$ and $\widetilde{\boldsymbol{\tau}}_2$
are homologous if and only if $S(\widetilde{\boldsymbol{\tau}}_1)=S(\widetilde{\boldsymbol{\tau}}_2)$.
\end{lemma}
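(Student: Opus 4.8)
The plan is to reduce everything to winding numbers about the point obstacles $P_1,\dots,P_M$, showing that the homology relation and the sign vector $S$ encode exactly the same data. First I would get rid of the path--dependent circles: choose a single circle $C$ centered at $Q_{\texttt{c}}$ whose radius is large enough that $\widetilde{\boldsymbol{\tau}}_1,\widetilde{\boldsymbol{\tau}}_2\subset\texttt{Int}(C)$ and $P_i\in\texttt{Int}(C)$ for all $i\in\mathcal{M}$. A short argument shows $S(\widetilde{\boldsymbol{\tau}}_k)$ is independent of which such radius is taken: enlarging $C$ merely extends the two radial segments and sweeps the connecting arc strictly outside the previous disk, hence never across a fixed $P_i$, so it cannot move $P_i$ from one side of the resulting closed curve to the other. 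Consequently both paths may be closed by the \emph{same} curve $\Theta$ (from $q_{\texttt{g}}$ along the ray to $\partial C$, around the arc, and back along the ray to $q_{\texttt{s}}$); write $\ell_k\triangleq\widetilde{\boldsymbol{\tau}}_k\sqcup\Theta$ for the resulting loop, oriented anticlockwise as in~\eqref{eq:semi-areas}.

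Second, I would translate the sign $s_i$ into a winding number. For the paths of interest — integral curves of $-\nabla\phi_{\mathcal{P}}$, hence simple — the loop $\ell_k$ has winding number in $\{0,1\}$ about each $P_i$, and by the very construction of $C^{+}$ and $C^{-}$ in~\eqref{eq:semi-areas} one has $P_i\in C^{+}(\widetilde{\boldsymbol{\tau}}_k)$ iff this winding number is $1$ and $P_i\in C^{-}(\widetilde{\boldsymbol{\tau}}_k)$ iff it is $0$. Writing $\mathrm{wn}(\ell,P)$ for the winding number of a loop $\ell$ about a point $P$, this reads $s_i(\widetilde{\boldsymbol{\tau}}_k)=2\,\mathrm{wn}(\ell_k,P_i)-1$.

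Third comes the decisive step. Since $\widetilde{\boldsymbol{\tau}}_1$ and $\widetilde{\boldsymbol{\tau}}_2$ have the same endpoints and are both closed by the identical curve $\Theta$, cancelling the common $\Theta$--pieces turns the concatenation $\ell_1\sqcup(-\ell_2)$ into the cycle $\widetilde{L}_{12}=\widetilde{\boldsymbol{\tau}}_1\sqcup-\widetilde{\boldsymbol{\tau}}_2$ of Def.~\ref{def:homology-path}; together with additivity of the winding number this gives $\mathrm{wn}(\widetilde{L}_{12},P_i)=\mathrm{wn}(\ell_1,P_i)-\mathrm{wn}(\ell_2,P_i)$ for every $i\in\mathcal{M}$. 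Finally I would invoke the standard characterization of homology in the punctured plane — the same fact used via Lemma~2 of~\cite{bhattacharya2012topological} in the proof of Lemma~\ref{lemma:homology-consistency} — that $\widetilde{\boldsymbol{\tau}}_1$ and $\widetilde{\boldsymbol{\tau}}_2$ are homologous iff $\mathrm{wn}(\widetilde{L}_{12},P_i)=0$ for all $i$. Chaining the relations, $\widetilde{\boldsymbol{\tau}}_1$ and $\widetilde{\boldsymbol{\tau}}_2$ are homologous $\iff\mathrm{wn}(\ell_1,P_i)=\mathrm{wn}(\ell_2,P_i)$ for all $i\iff s_i(\widetilde{\boldsymbol{\tau}}_1)=s_i(\widetilde{\boldsymbol{\tau}}_2)$ for all $i\iff S(\widetilde{\boldsymbol{\tau}}_1)=S(\widetilde{\boldsymbol{\tau}}_2)$, the middle equivalence using that the winding numbers lie in $\{0,1\}$.

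The part I expect to be the main obstacle is making the second step rigorous: pinning down the informally phrased region $C^{\pm}$ (``the area enclosed by the anticlockwise loop'') as a precise winding-number statement, and handling the mild technicality that $\ell_1\sqcup(-\ell_2)$ need not be a \emph{simple} closed curve even when each path is, so that ``$\texttt{Int}(\widetilde{L}_{12})\cap P_i=\emptyset$'' in Def.~\ref{def:homology-path} must be read homologically ($\mathrm{wn}(\widetilde{L}_{12},P_i)=0$) rather than as a Jordan interior. Once that bookkeeping is settled, what remains is only additivity of the winding number and its $\{0,1\}$--valuedness on each single--path loop $\ell_k$.
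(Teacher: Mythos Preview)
Your argument is correct and rests on the same underlying mechanism as the paper's proof: both reduce the question to whether the cycle $\widetilde{L}_{12}=\widetilde{\boldsymbol{\tau}}_1\sqcup-\widetilde{\boldsymbol{\tau}}_2$ encloses any of the point obstacles, which the paper phrases via the H-signature and the Residue Theorem (the $i$-th component of $\mathcal{H}_2$ is, up to a factor, exactly your winding number about $P_i$). The routes differ in packaging. The paper argues by contradiction and relies on the set identity $C^{+}(\widetilde{\boldsymbol{\tau}}_1)\setminus C^{+}(\widetilde{\boldsymbol{\tau}}_2)=\texttt{Int}(\widetilde{L}_{12})$ together with the Residue and Cauchy Integral Theorems; you instead fix a \emph{common} closing arc $\Theta$, read $s_i$ as a $\{0,1\}$-valued winding number of $\ell_k$, and then use additivity to obtain $\mathrm{wn}(\widetilde{L}_{12},P_i)=\mathrm{wn}(\ell_1,P_i)-\mathrm{wn}(\ell_2,P_i)$ directly. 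Your approach buys two things: it makes explicit (and justifies) the tacit assumption that the same circle may be used for both paths, and it sidesteps the somewhat informal set-difference identity by replacing it with the cleaner winding-number additivity. Conversely, the paper's H-signature formulation connects immediately to the homology invariant already used in Lemma~\ref{lemma:homology-consistency}, so no separate ``homology $\Leftrightarrow$ zero winding numbers'' step is needed. The technical caveat you flag---that $\ell_k$ need not be simple if the path crosses the radial segments, and that $\texttt{Int}(\widetilde{L}_{12})$ must be read homologically---is real and is glossed over in the paper's argument as well.
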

\begin{proof}
(\emph{Sufficiency}): The proof is by contradiction.
First, assume that~$\widetilde{\boldsymbol{\tau}}_1$ and $\widetilde{\boldsymbol{\tau}}_2$
are homologous but $S(\widetilde{\boldsymbol{\tau}}_1) \neq S(\widetilde{\boldsymbol{\tau}}_2)$.
Due to Def.~\ref{def:multi-directional-distance},
there exits~$K \geq 1$ point obstacles $P_{m_1}, P_{m_1}, \cdots, P_{m_K}$ such that
$P_{m_\ell} \in C^{+}(\widetilde{\boldsymbol{\tau}}_1)$ and $P_{m_\ell} \notin C^{+}(\widetilde{\boldsymbol{\tau}}_2)$,
$\forall \ell\in \{1, 2,\cdots,K\}\triangleq \mathcal{K}\subseteq \mathcal{M}$.
Furthermore, it follows that:
$
C^{+}(\widetilde{\boldsymbol{\tau}}_1) \backslash C^{+}(\widetilde{\boldsymbol{\tau}}_2)
  = \texttt{Int}\Big(\widetilde{\boldsymbol{\tau}}_1 \sqcup \overrightarrow{q_{\texttt{g}} q_{\texttt{g}}^{\prime}} \sqcup
      \wideparen{q_{\texttt{g}}^{\prime} q_{\texttt{s}}^{\prime}} \sqcup \overrightarrow{q_{\texttt{s}} q_{\texttt{s}}^{\prime}}) \, \backslash
      \texttt{Int}\Big(\widetilde{\boldsymbol{\tau}}_2 \sqcup \overrightarrow{q_{\texttt{g}} q_{\texttt{g}}^{\prime}}
      \sqcup \wideparen{q_{\texttt{g}}^{\prime} q_{\texttt{s}}^{\prime}} \sqcup \overrightarrow{q_{\texttt{s}} q_{\texttt{s}}^{\prime}}\Big)
= \texttt{Int}\big(\widetilde{\boldsymbol{\tau}}_1 \sqcup -\widetilde{\boldsymbol{\tau}}_2\big) = \widetilde{L}_{12},
$
which implies that $P_{m_\ell} \in \texttt{Int}(\widetilde{L}_{12})$,
$\forall \ell\in  \mathcal{K}$.
Via Def.~\ref{eq:h-signature} and the Residue Theorem~\cite{zwillinger2021handbook},
it holds that:
\begin{equation}
\begin{aligned}
  & \mathcal{H}_2(\widetilde{\boldsymbol{\tau}}_1) - \mathcal{H}_2(\widetilde{\boldsymbol{\tau}}_2)
  = \int_{\widetilde{\boldsymbol{\tau}}_1} \Gamma(\widetilde{z}) d\widetilde{z} - \int_{\widetilde{\boldsymbol{\tau}}_2} \Gamma(\widetilde{z}) d\widetilde{z}\\
= & 2\pi j \sum_{\ell=1}^K \underset{\widetilde{z} \to P_{m_\ell}}{\textbf{lim}}(\widetilde{z} - P_{m_\ell})
[\frac{\gamma_1(\widetilde{z})}{\widetilde{z}-P_1}, \cdots \frac{\gamma_M(\widetilde{z})}{\widetilde{z}-P_M}]^\intercal\\
= & [\cdots,\, \gamma_{m_1}(P_{m_1}),\, \cdots,\, \gamma_{m_K}(P_{m_K}),\,
  \cdots]^\intercal \neq \mathbf{0},
\end{aligned}
\end{equation}
which contradicts the assumption that~$\widetilde{\boldsymbol{\tau}}_1$ and $\widetilde{\boldsymbol{\tau}}_2$ are homologous.
Thus,$S(\widetilde{\boldsymbol{\tau}}_1) = S(\widetilde{\boldsymbol{\tau}}_2)$ and the sufficiency holds.

(\emph{Necessity}): Assume that~$S(\widetilde{\boldsymbol{\tau}}_1) = S(\widetilde{\boldsymbol{\tau}}_2)$ holds
and no obstacles exist in $C^{+}(\widetilde{\boldsymbol{\tau}}_1) \backslash C^{+}(\widetilde{\boldsymbol{\tau}}_2)=
\texttt{Int}(\widetilde{\boldsymbol{\tau}}_1 \sqcup -\widetilde{\boldsymbol{\tau}}_2)=\texttt{Int}(\widetilde{L}_{12})$.
By the Cauchy Integral Theorem~\cite{zwillinger2021handbook},
$\mathcal{H}_2(\widetilde{\boldsymbol{\tau}}_1) - \mathcal{H}_2(\widetilde{\boldsymbol{\tau}}_2)=\int_{\widetilde{L}_{12}} \Gamma(\widetilde{z}) d\widetilde{z}=\mathbf{0}$,
which concludes the proof.
\end{proof}

\begin{figure}[t]
  \centering
  \includegraphics[width=0.95\hsize]{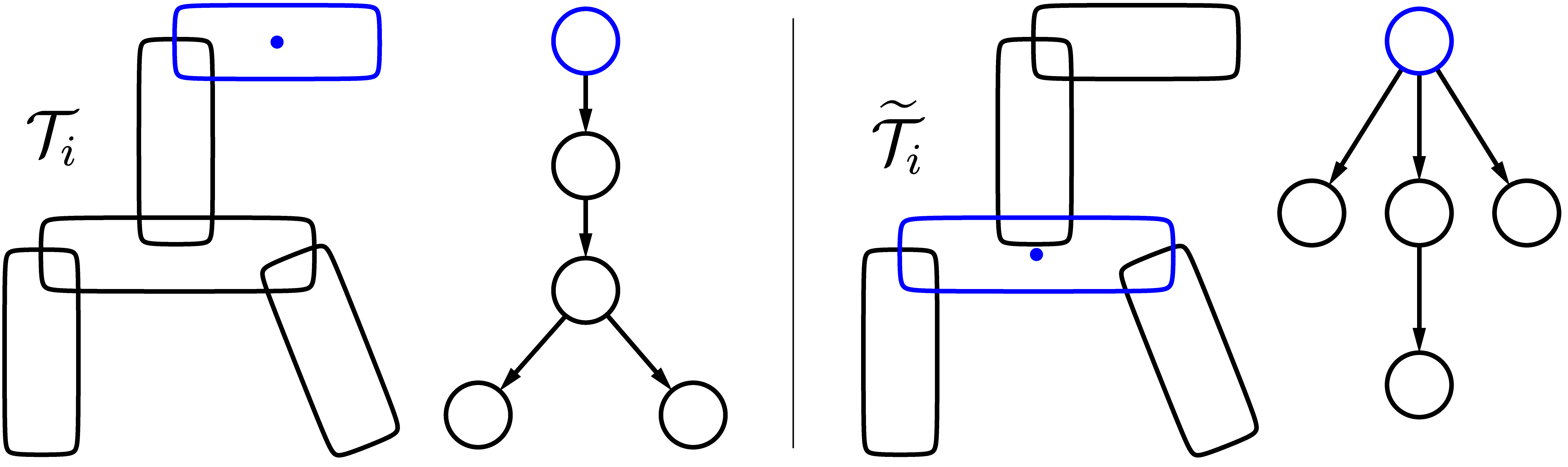}
  \vspace{-0.1in}
  \caption{Different tree structures representing
    the same overlapping squircles,
    where the squircle (in blue) is chosen as the root.
  }\label{fig:reroot}
  \vspace{-0.2in}
\end{figure}

\begin{lemma}\label{lemma:homotopy}
Consider two paths $\widetilde{\boldsymbol{\tau}}_1, \widetilde{\boldsymbol{\tau}}_2 \subset \mathcal{P}$ that
$\widetilde{\boldsymbol{\tau}}_1(0) = \widetilde{\boldsymbol{\tau}}_2(0) = q_{\texttt{s}}$
and $\widetilde{\boldsymbol{\tau}}_1(1) = \widetilde{\boldsymbol{\tau}}_2(1) = q_{\texttt{g}}$.
If $S(\widetilde{\boldsymbol{\tau}}_1) \neq S(\widetilde{\boldsymbol{\tau}}_2)$ holds,
then $\widetilde{\boldsymbol{\tau}}_1$ and $\widetilde{\boldsymbol{\tau}}_2$ belong to different homotopy classes.
\end{lemma}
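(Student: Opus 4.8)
The plan is to derive this as a direct corollary of Lemma~\ref{lemma:homology-metric} together with the standard fact, recalled in Sec.~\ref{subsec:homotopic} and proven in~\cite{bhattacharya2012topological}, that any two homotopic paths with common endpoints are necessarily homologous. I would therefore argue by contraposition rather than attempt to construct or obstruct a homotopy $\chi$ directly.

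First, I would observe that the hypothesis $S(\widetilde{\boldsymbol{\tau}}_1)\neq S(\widetilde{\boldsymbol{\tau}}_2)$, combined with the ``only if'' direction of Lemma~\ref{lemma:homology-metric} (whose hypotheses are met since $\widetilde{\boldsymbol{\tau}}_1(0)=\widetilde{\boldsymbol{\tau}}_2(0)=q_{\texttt{s}}$ and $\widetilde{\boldsymbol{\tau}}_1(1)=\widetilde{\boldsymbol{\tau}}_2(1)=q_{\texttt{g}}$), immediately yields that $\widetilde{\boldsymbol{\tau}}_1$ and $\widetilde{\boldsymbol{\tau}}_2$ are \emph{not} homologous in $\mathcal{P}$; equivalently, the closed loop $\widetilde{L}_{12}=(\widetilde{\boldsymbol{\tau}}_1\sqcup-\widetilde{\boldsymbol{\tau}}_2)$ encloses at least one point obstacle $P_i$.

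Next, I would invoke the homomorphism from the first homotopy group to the first homology group (the Hurewicz-type map cited above): for paths sharing both endpoints, being homotopic implies being homologous. Taking the contrapositive, if $\widetilde{\boldsymbol{\tau}}_1$ and $\widetilde{\boldsymbol{\tau}}_2$ are not homologous, then they cannot be homotopic, hence they lie in distinct homotopy classes, which is the claim. If a more self-contained argument is preferred, one can instead reason directly: were $\widetilde{\boldsymbol{\tau}}_1$ and $\widetilde{\boldsymbol{\tau}}_2$ homotopic via some continuous $\chi:[0,1]^2\to\mathcal{P}$, the image of $\chi$ would sweep out a region whose boundary is $\widetilde{L}_{12}$ and which avoids every $P_i$ (since $\chi$ maps into $\mathcal{P}$), so $\texttt{Int}(\widetilde{L}_{12})$ would contain no point obstacle, contradicting the previous paragraph.

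I do not anticipate a genuine obstacle here: the content is essentially bookkeeping on top of Lemma~\ref{lemma:homology-metric} and the known one-directional implication ``homotopic $\Rightarrow$ homologous.'' The only point requiring care is to state clearly that the converse fails in general --- homologous paths need not be homotopic, as illustrated in Fig.~\ref{fig:multi-distance} (right) --- so that this lemma is read as a one-way separation criterion for homotopy classes and not as a characterization.
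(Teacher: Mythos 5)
Your proposal is correct and follows essentially the same route as the paper: both reduce the claim to Lemma~\ref{lemma:homology-metric} combined with the implication ``homotopic $\Rightarrow$ homologous'' from~\cite{bhattacharya2012topological}, differing only in that you phrase it as a contraposition while the paper phrases it as a proof by contradiction. Your closing remark that the converse fails matches the paper's Remark~\ref{remark:obstacle-shape}.
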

\begin{proof}
Assume that~$S(\widetilde{\boldsymbol{\tau}}_1) \neq S(\widetilde{\boldsymbol{\tau}}_2)$
but $\widetilde{\boldsymbol{\tau}}_1$ and $\widetilde{\boldsymbol{\tau}}_2$ are homotopic.
It has been shown~\cite{bhattacharya2012topological} that
homotopic paths are homologous,
thus $\widetilde{\boldsymbol{\tau}}_1$ and $\widetilde{\boldsymbol{\tau}}_2$ are homologous.
By Lemma~\ref{lemma:homology-metric}, $S(\widetilde{\boldsymbol{\tau}}_1) = S(\widetilde{\boldsymbol{\tau}}_2)$,
which contradicts the assumption.
\end{proof}

\begin{remark}\label{remark:obstacle-shape}
Lemma~\ref{lemma:homology-metric}
states that the sign function~$S(\widetilde{\boldsymbol{\tau}})$ is equivalence to a \emph{homology} class,
while Lemma~\ref{lemma:homotopy} implies that
each distinct $S(\widetilde{\boldsymbol{\tau}})$ corresponds to a different \emph{homotopy} class.
However, the necessity of the condition in Lemma~\ref{lemma:homotopy}
does not hold, as some different homotopy classes share the same
sign function $S(\widetilde{\boldsymbol{\tau}})$,
as shown in Fig~\ref{fig:multi-distance}.
  \hfill $\blacksquare$
\end{remark}

\subsubsection{Synthesis of Homotopic Paths}\label{subsubsec:search}
To find paths of different homotopy classes within~$\mathcal{P}$,
it is essential to first generate reference paths of different classes
for optimization.
Since each sign $S(\cdot)$ represents one class,
the set of D-signatures associated with potential homotopy classes
is given by:
\begin{equation}\label{eq:all_homo}
  \begin{split}
    &\mathscr{D}^\star \triangleq \Big\{S^\star \odot \overline{D}^\star
    \, | \,
    S^\star \in \{+1, -1\}^M,\, \overline{D}^\star=[d^\star_i(S^\star)]\Big\};\\
  &\overline{d}_i^\star(S^\star) = \underset{j\in \mathcal{M}, s_i^\star + s_j^\star=0}{\textbf{min}}
  \Big{\{}\|P_i - P_j\|\Big{\}}, \; \forall i\in\mathcal{M};
\end{split}
\end{equation}
where $S^\star \triangleq [s_i^\star]$ and
$\overline{D}^\star \triangleq [\overline{d}_i^\star]$.
The second condition implies that the closest obstacle~$P_j$
with the opposite sign is found for each obstacle~$P_i$, $\forall i, j \in \mathcal{M}$,
between which the relative distance is used as the distance measure
for the path.
Consequently, any D-signature~$D^\star\in \mathscr{D}^\star$
is associated with a homotopy class,
which is the objective for the hybrid optimization scheme in the sequel.

\subsection{Hybrid Optimization of Harmonic Potentials}\label{subsec:hybrid}
Given a homotopy class associated with the D-signature~$D^\star$,
it remains a question whether there exists a harmonic potential~$\varphi_{\mathcal{F}}(p)$,
of which the resulting path from the same initial point belongs to the same homotopy class.
\subsubsection{Structure Selection for Tree-of-Stars}
\label{subsubsec:forest_world}
As introduced in~\cite{rimon1992exact, wang2025hybrid},
a forest world consists of several disjointed groups of obstacles as trees-of-stars.
As shown in Fig.~\ref{fig:reroot}, each group is a finite union of overlapping obstacles whose adjacency graph is a tree
with a unique root.
The choice of these tree structures determines the locations of the obstacles in the point world,
which in turn changes the underlying potential fields.
Consequently, the tree structure can be customized based on the desired homotopy classes.

\begin{definition}[Tree-of-Squircles]\label{def:tree-of-squircles}
A tree-of-squircles is a directed acyclic tree
$\mathcal{T} \triangleq (\mathcal{O},\, E,\, O^\star)$,
where: (I)~$\mathcal{O}$ is the set of obstacles within~$\mathcal{T}$;
(II)~$E\subset \mathcal{O} \times \mathcal{O}$ is the set of directed parent-child relations,
i.e., $(O'_{\ell},\, O_{\ell}) \in E$ if squircles $O'_{\ell}$ and $O_{\ell}$ overlap;
and (III)~$O^\star\in \mathcal{O}$ is the root.\hfill $\blacksquare$
\end{definition}

A tree~$\mathcal{T}_i\triangleq (\mathcal{O}_i,\, E_i,\, O_i^\star)$
is \emph{valid} if each squircle has exactly one parent squircle, except for the root.
Moreover, a valid tree~$\mathcal{T}_i$ can be \emph{re-rooted} by choosing a
different root,
and the rest of the nodes are re-structured according to their adjacency relation
via a breadth-first search,
yielding a new tree~$\widetilde{\mathcal{T}}_i$, as shown in Fig.~\ref{fig:reroot}.
The lemma below ensures that the re-rooted tree is unique and remains valid.


\begin{lemma}\label{lemma:valid}
Given a valid tree~$\mathcal{T}_i=(\mathcal{O}_i,\, E_i,\, O_i^\star)$,
the re-rooted tree~$\widetilde{\mathcal{T}}_i=(\mathcal{O}_i,\,\widetilde{E}_i,\,\widetilde{O}^\star_{i})$
is unique and valid for any chosen new root~$\widetilde{O}^\star_{i} \in \mathcal{O}_i$.
\end{lemma}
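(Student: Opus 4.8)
The plan is to reduce the statement to an elementary fact about finite trees: an undirected tree admits a unique orientation of its edges toward (equivalently, away from) any prescribed vertex. First I would pass from $\mathcal{T}_i$ to the undirected graph $G_i \triangleq (\mathcal{O}_i,\, \bar{E}_i)$ obtained by forgetting edge orientations, where $\bar{E}_i$ consists of the overlapping squircle pairs. By the forest-world construction recalled in Sec.~\ref{subsubsec:forest_world}, the adjacency graph of a tree-of-stars is connected and acyclic, so $G_i$ is a tree; the same can be seen directly from validity of $\mathcal{T}_i$, since every node except $O_i^\star$ contributes exactly one parent edge, whence $|\bar{E}_i| = |\mathcal{O}_i|-1$, and iterating the parent map from any node terminates at $O_i^\star$, giving connectivity.

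Next I would make the re-rooting operation precise: run a breadth-first search on $G_i$ from the chosen vertex $\widetilde{O}^\star_i$ and orient each edge $\{O', O\} \in \bar{E}_i$ as $(O', O)$ whenever $O'$ is discovered before $O$. Because $G_i$ is a connected tree, the BFS visits every vertex exactly once and its search edges are precisely $\bar{E}_i$; hence $\widetilde{E}_i$ has the same underlying edge set as $E_i$ and differs only in orientation, so $\widetilde{\mathcal{T}}_i = (\mathcal{O}_i,\, \widetilde{E}_i,\, \widetilde{O}^\star_i)$ is again a tree-of-squircles in the sense of Def.~\ref{def:tree-of-squircles}.

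For validity of $\widetilde{\mathcal{T}}_i$ I would invoke uniqueness of paths in a tree: for each $O \neq \widetilde{O}^\star_i$, let $\pi(O)$ be the unique $G_i$-path from $\widetilde{O}^\star_i$ to $O$; then the parent of $O$ assigned by the procedure is the unique neighbor of $O$ lying on $\pi(O)$, and $\widetilde{O}^\star_i$ has no parent, so every node other than the root has exactly one parent. For uniqueness of the re-rooted tree, I would argue that any valid tree on the vertex set $\mathcal{O}_i$ rooted at $\widetilde{O}^\star_i$ whose edges are overlap pairs must, by the same edge-count argument, use exactly the edges of $G_i$ (a spanning subtree of a tree equals the tree), and the orientation of each such edge is then forced — the endpoint closer to $\widetilde{O}^\star_i$ along $G_i$ must be the parent, else some node could not reach the root by following parent edges. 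Thus $\widetilde{E}_i$ is uniquely determined by $G_i$ and $\widetilde{O}^\star_i$, completing the proof.

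The argument is essentially routine graph theory; the only point deserving care is the reliance on the forest-world hypothesis that the adjacency graph of each tree-of-stars is genuinely a tree rather than a graph merely containing one as a spanning subgraph. If the overlap graph had a cycle, different choices of spanning tree would produce distinct re-rootings, so the uniqueness claim of the lemma depends on this structural assumption, which I would flag explicitly at the outset of the proof.
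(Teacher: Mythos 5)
Your proof is correct and follows essentially the same route as the paper's: re-root by breadth-first search from the new root, re-orienting each overlap edge away from it, with acyclicity and the one-parent property of the original tree guaranteeing that the result is unique and valid. You are somewhat more explicit than the paper about why uniqueness holds and about the dependence on the overlap graph being genuinely acyclic, but the underlying argument is the same.
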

\begin{proof}
There is a unique path from the original root~$O_i^\star$ to
any other node~$O_\ell \in \mathcal{O}_i$ within~$\mathcal{T}_i$.
Given the new root~$\widetilde{O}^\star_{i}$, denote
by~$\widehat{\mathcal{O}}^\star_{i}\triangleq \{O\in \mathcal{O}_i\,|\,
(\widetilde{O}^\star_{i}, O) \in E_i \lor
(O, \widetilde{O}^\star_{i}) \in E_i\}$
its set of parents and children of~$\widetilde{O}^\star_{i}$ within~$\mathcal{T}_i$.
Since each squircle~$O_\ell \in \widehat{\mathcal{O}}^\star_{i}$
shares a common center with~$\widetilde{O}^\star_{i}$,
$O_\ell$ is re-assigned as the children of~$\widetilde{O}^\star_{i}$
in~$\widetilde{\mathcal{T}}_i$.
Thus, it holds that~$(\widetilde{O}^\star_{i}, O_\ell) \in \widetilde{E}_i$ within~$\widetilde{\mathcal{T}}_i$.
This process is repeated until all nodes in $\mathcal{O}_i$
are re-assigned in~$\widetilde{\mathcal{T}}_i$.
Since~$\mathcal{T}_i$ is acyclic and each node has one parent,
the re-rooted tree~$\widetilde{\mathcal{T}}_i$ is unique and remains valid.
\end{proof}

Given a forest structure~$\mathcal{F}$,
the transformation~$\Phi_{\mathcal{F}\rightarrow \mathcal{P}}$
from forest world to point world in~\eqref{eq:complete-nf}
is determined via several
diffeomorphic transformations~\cite{loizou2017navigation, wang2025hybrid}, i.e.,
\begin{equation}\label{eq:purge}
  \Phi_{\mathcal{F}\rightarrow \mathcal{P}}(p) \triangleq
  \psi \circ
  \Phi_{\mathcal{M}\rightarrow \overline{\mathcal{P}}} \circ
  \Phi_{\mathcal{S}\rightarrow \mathcal{M}} \circ
  \Phi_{\mathcal{F}\rightarrow \mathcal{S}}(p),
\end{equation}
where $\Phi_{\mathcal{F}\rightarrow \mathcal{S}}(p)$ is the \emph{purging transformation}:
$\Phi_{\mathcal{F}\rightarrow \mathcal{S}}(p) \triangleq \Phi_{M} \circ \cdots  \Phi_{2}\circ \Phi_{1}(p)$.
Here, $\Phi_{i}(p) \triangleq f_{i,1} \circ f_{i,2} \cdots \circ f_{i,N_i}(p)$ is the purging map for the~$i$-th
tree~$\mathcal{T}_i$, $\forall i \in \mathcal{M}$,
where~$f_{i,n_i}(\cdot)$ is the transformation for the~$n_i$-th leaf
with~$n_i \in \{1, 2,\cdots,N_i\}\triangleq \mathcal{N}_i$ and the depth~$N_i>0$,
i.e.,
$f_{i,n_i}(p) \triangleq p\,\big(1-\sigma_{n_i}(p)\big)+ \sigma_{n_i}(p)\, T_{n_i}(p)$,
where $T_{n_i}(p)$ is a scaling function
mapping the boundary of the child squircle to its parent;
and $\sigma_{n_i}(p)$ is the analytic switch
that attains value of one and vanishes nears other squircles and the goal.
Moreover,
$\Phi_{\mathcal{S}\rightarrow \mathcal{M}}$ and $\Phi_{\mathcal{M}\rightarrow \overline{\mathcal{P}}}$
are the transformations from star world to sphere world and sphere world to bounded point world.
The exact derivations are omitted here due to limited space.
Note that
$p_i^\star =
  \Phi_{\mathcal{M}\rightarrow \overline{\mathcal{P}}} \circ
  \Phi_{\mathcal{S}\rightarrow \mathcal{M}} \circ
  \Phi_{\mathcal{F}\rightarrow \mathcal{S}}(p_i^\star)$
holds for $p_i^\star$ being the center of the
root~$\widetilde{O}^\star_{i}$ for~$\mathcal{T}_i$.
Lastly, the function
$\psi(\widetilde{q}) \triangleq \frac{\rho_0}{\rho_0 - \|\widetilde{q} - q_0\|}(\widetilde{q} - q_0) + q_0$
transforms the bounded point world $\overline{\mathcal{P}}$
to the unbounded point world $\mathcal{P}$,
where $\rho_0 $ and $q_0$ are the radius and center of the sphere world.
Therefore, the point obstacles in~$\mathcal{P}$ are given
by~$\mathscr{P} \triangleq \{P_i\}$ with~$P_i = \psi(p_i^\star), \forall i \in \mathcal{M}$.

Note that there are~$M$ trees-of-squircles~$\{\mathcal{T}_1, \mathcal{T}_2, \cdots, \mathcal{T}_M\}$
within the forest world~$\mathcal{F}$.
Since the root of each tree can be re-selected as described,
yielding a large variety of possible forests, denoted by~$\mathscr{F}$.
For a given reference homotopy class~$D^\star \in \mathscr{D}^\star$,
the Fisher distance~\cite{boyd2004convex} is proposed below.
\begin{definition}[Fisher Distance]\label{def:fisher-discriminant}
Given the forest~$\mathcal{F}$
and the homotopy class~$D^\star$,
their \emph{Fisher distance} is defined as:
\begin{equation}\label{fisher-discriminant}
  J(D^\star,\, \mathcal{F}) \triangleq
  \frac{\|\overline{P}_1 - \overline{P}_2\|^2}{(\Sigma_1)^2 + (\Sigma_2)^2},
\end{equation}
where~$\mathscr{P}_1 \triangleq \{P_i \in \mathscr{P}| s_i=-1, i\in \mathcal{M}\}$,
$\mathscr{P}_2 \triangleq \{P_i \in  \mathscr{P}| s_i=1, i\in\mathcal{M}\}$;
$\overline{P}_\ell$ and $\Sigma_\ell$ are the mean and variance
of the point obstacles within~$\mathscr{P}_\ell$, respectively for~$\ell =1, 2$.
\hfill $\blacksquare$
\end{definition}
The Fisher distance measures the spatial distribution of point obstacles~$\{P_i\}$
relative to the given homotopy class~$D^\star$.
A higher Fisher distance indicates that the obstacles tend to be separated into two groups
on either side of the trajectory~$\widetilde{\boldsymbol{\tau}}$,
given the desired homotopy class.
Therefore, this separation renders the desired class
easier to find by the subsequent parameter optimization.
Consequently, the candidate forest structure can be selected:
\begin{equation}\label{eq:optimal-root}
  \mathcal{F}^\star \triangleq \textbf{max}_{\mathcal{F} \in \mathscr{F}} J(D^\star,\, \mathcal{F}),
\end{equation}
where~$\mathscr{F}$ is the set of all tree structures described earlier;
and~$\mathcal{F}^\star$ is the best structure for the class~$D^\star$.
Once the tree structure is selected, the diffeomorphic
transformation~$\Phi_{\mathcal{F}\rightarrow \mathcal{P}}$
from the forest world to the point world in~\eqref{eq:complete-nf}
is determined via the \emph{purging} process in~\eqref{eq:purge}.

\subsubsection{Parameter Optimization for Harmonic Potentials}\label{subsubsec:harmonic-potentials}
Once the forest structure~$\mathcal{F}^\star$ is selected,
the associated point world~$\mathcal{P}$ is derived.
Thus, it remains to design the weight parameters~$\mathbf{w}$ for the harmonic
potential~$\phi_{\mathcal{P}}(q)$ in~\eqref{eq:harmonic-point-potential},
such that the resulting path in~$\mathcal{P}$ has a D-signature close
to the desired~$D^\star$.
To begin with, the weights are constrained
for~$\phi_{\mathcal{P}}(q)$ to be valid.

\begin{figure}[t]
  \centering
  \includegraphics[width=0.9\hsize]{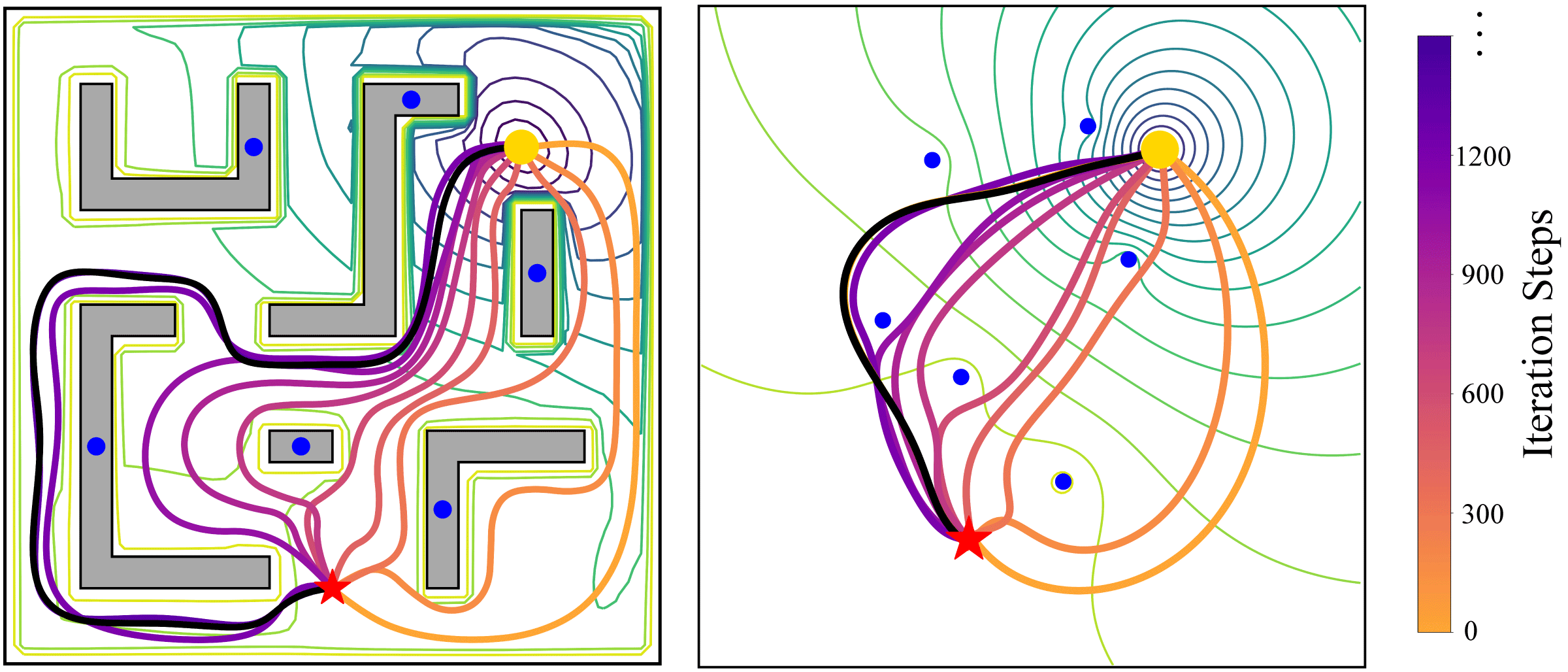}
  \vspace{-0.1in}
  \caption{Evolution of the resulting
    paths~$\widetilde{\boldsymbol{\tau}}(t, \mathbf{w})$ in~$\mathcal{F}$ and $\mathcal{P}$
    as the weights~$\mathbf{w}$
    are optimized by~\eqref{eq:opt-w} given the desired~$D^\star$ (in black).}
  \label{fig:gd_process}
  \vspace{-0.15in}
\end{figure}

\begin{lemma}\label{lemma:choice-of-weight}
If the weights~$\mathbf{w}$ in~$\phi_{\mathcal{P}}$ satisfy that
$w_{\texttt{g}}\geq\sum_{i\in \mathcal{M}} w_i + 1$,
its resulting path~$\widetilde{\boldsymbol{\tau}}$ in~$\mathcal{P}$
converges to the goal~$q_{\texttt{g}}$ without intersecting the
obstacles in~$\overline{\mathcal{P}}=\{P_i\}$,
from almost all initial point~$q_{\texttt{s}}\in \mathcal{P}$.
\end{lemma}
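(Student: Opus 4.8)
The plan is to establish the two asserted properties — collision avoidance with the point obstacles and convergence to the goal — by analyzing the gradient flow of $\phi_{\mathcal{P}}(q,\mathbf{w})$ directly, exploiting the explicit form $\phi(q,P)=\ln(\|q-P\|^2)$ and the classical facts about harmonic potentials built from logarithmic sources and sinks. First I would record the gradient: $\nabla_q\phi_{\mathcal{P}}(q,\mathbf{w}) = w_{\texttt{g}}\,\frac{2(q-q_{\texttt{g}})}{\|q-q_{\texttt{g}}\|^2} - \sum_{i=1}^M w_i\,\frac{2(q-P_i)}{\|q-P_i\|^2}$, and note that $\phi_{\mathcal{P}}$ is harmonic on $\mathcal{P}\setminus\{q_{\texttt{g}}\}$ (each term is harmonic away from its singularity), so by the maximum principle it has no interior local minima or maxima; the only critical points are isolated saddles, and the goal $q_{\texttt{g}}$ is the sole sink. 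This is essentially the standard argument of \cite{loizou2017navigation}, so I would cite it for the generic structure and focus the new work on the role of the weight condition.

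The collision-avoidance part is where the hypothesis $w_{\texttt{g}}\ge\sum_{i\in\mathcal{M}}w_i+1$ does its work, and I would prove it by a barrier/boundary-behaviour argument near each $P_i$. As $q\to P_i$, the term $-w_i\phi(q,P_i) = -w_i\ln(\|q-P_i\|^2)\to +\infty$, so $\phi_{\mathcal{P}}(q,\mathbf{w})\to+\infty$ on a punctured neighbourhood of $P_i$; hence each $P_i$ is a strict repeller of the negated-gradient flow, and a trajectory $\widetilde{\boldsymbol{\tau}}$ started away from $P_i$ cannot reach it, since $\phi_{\mathcal{P}}$ decreases monotonically along the flow while $\phi_{\mathcal{P}}\to+\infty$ at $P_i$. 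More carefully, I would show the sublevel sets $\{q : \phi_{\mathcal{P}}(q,\mathbf{w})\le \phi_{\mathcal{P}}(q_{\texttt{s}},\mathbf{w})\}$ are bounded away from every $P_i$ and, crucially, bounded in $\mathcal{P}$ (i.e.\ the trajectory does not escape to infinity): at large $\|q\|$, $\phi_{\mathcal{P}}(q,\mathbf{w}) = \big(w_{\texttt{g}}-\sum_i w_i\big)\ln\|q\|^2 + O(1/\|q\|) \to +\infty$ precisely because $w_{\texttt{g}}-\sum_i w_i \ge 1 > 0$. This is the step I expect to be the main obstacle: the weight condition is exactly calibrated so that the net "charge at infinity'' is attractive (the sink dominates all sources globally), which simultaneously rules out escape to infinity and, combined with the local repulsion at each $P_i$, confines the trajectory to a compact subset of $\mathcal{P}$ on which $\phi_{\mathcal{P}}$ is smooth and proper.

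With compactness of the relevant sublevel set and properness of $\phi_{\mathcal{P}}$ on it in hand, convergence follows from LaSalle's invariance principle: $\phi_{\mathcal{P}}$ is a strict Lyapunov function for $\dot q = -\nabla_q\phi_{\mathcal{P}}$ off its critical set, so every trajectory converges to the set of critical points; since the interior critical points other than $q_{\texttt{g}}$ are non-degenerate saddles (again invoking the harmonic structure and \cite{loizou2017navigation, rimon1992exact}), their stable manifolds have measure zero, so from almost every $q_{\texttt{s}}\in\mathcal{P}$ the trajectory converges to $q_{\texttt{g}}$. I would close by remarking that the "almost all'' qualifier and the saddle-count argument are inherited verbatim from the navigation-function literature, so only the boundedness/no-escape estimate — driven by the inequality on $w_{\texttt{g}}$ — needs detailed justification here. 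A minor technical point to handle is that $\phi_{\mathcal{P}}$ is finite and smooth at $q_{\texttt{g}}$ only after the $\sigma(\cdot)$ squashing in \eqref{eq:complete-nf}; at the level of $\phi_{\mathcal{P}}$ itself the goal is a logarithmic sink with $\phi_{\mathcal{P}}\to-\infty$, which makes it the global infimum and reinforces, rather than complicates, the Lyapunov argument.
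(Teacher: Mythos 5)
Your proposal is correct and follows essentially the same three-step route as the paper: the asymptotic growth $\phi_{\mathcal{P}}(q)\sim (w_{\texttt{g}}-\sum_i w_i)\ln\|q\|^2\to+\infty$ under the weight condition rules out escape to infinity, local repulsion excludes collision with each $P_i$, and convergence from almost all initial points is inherited from the cited navigation-function results on the unique minimum and measure-zero saddle set. The only (harmless) difference is in the obstacle-avoidance step: you argue via the Lyapunov barrier $\phi_{\mathcal{P}}\to+\infty$ as $q\to P_i$ along decreasing sublevel sets, whereas the paper computes $\lim_{q\to P_i}\langle -\nabla_q\phi_{\mathcal{P}}(q),\,q-P_i\rangle = 2w_i>0$ to show the negated gradient points outward near each point obstacle; both are standard and equivalent here.
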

\begin{proof}
Given the above condition,
it follows from~\eqref{eq:harmonic-point-potential} that:
\begin{equation*}
\begin{aligned}
& \underset{\|q\| \to \infty}{\textbf{lim}}  \phi_{\mathcal{P}}(q)
  = \underset{\|q\| \to \infty}{\textbf{lim}} \ln \Big(\frac{\|q-q_{\texttt{g}}\|^{2\,w_{\texttt{g}}}}
  {\prod_{i=1}^{M} \|q-P_i\|^{2 w_i}} \Big) \\
  & = \underset{\|q\| \to \infty}{\textbf{lim}} \ln \Big(\|q\|^{2 \,(w_{\texttt{g}}-\sum_{i=1}^{M} w_i)} \Big)
  \geq \underset{\|q\| \to \infty}{\textbf{lim}} \ln \big(\|q\|^2 \big),
\end{aligned}
\end{equation*}
which reaches infinity.
Thus, the path~$\widetilde{\boldsymbol{\tau}}$
can not escape to infinity.
Moreover, the inner product between~$-\nabla_q \phi_{\mathcal{P}}(q)$
and $q-P_i$ as $q$ approaches $P_i\in \overline{\mathcal{P}}$ is given by:
\begin{equation*}
\begin{aligned}
  &\underset{q \to P_i}{\textbf{lim}} \big{\langle} -\nabla_q \phi_{\mathcal{P}}(q),\, q-P_i
    \big{\rangle}
    = \underset{q \to P_i}{\textbf{lim}} \Big{\langle} 2\mathbf{w}^\intercal\cdot
      \Big{[}\frac{-(q-q_{\texttt{g}})}{\|q-q_{\texttt{g}}\|^2},\\
      & \frac{(q-P_1)}{\|q-P_1\|^2}, \cdots, \frac{(q-P_M)}{\|q-P_M\|^2}
\Big{]}^\intercal,
q-P_i\Big{\rangle}
= \,\, 2 w_i > 0,
\end{aligned}
\end{equation*}
which indicates that the negated gradient directs outwards in the close vicinity of each
point obstacle~$P_i\in \overline{\mathcal{P}}$.
Thus, the path~$\widetilde{\boldsymbol{\tau}}$ is guaranteed to
avoid intersection with all point obstacles.
Lastly, Theorem~1 from~\cite{loizou2022mobile} has shown that
the potential~$\phi_{\mathcal{P}}(q)$ has a unique global minimum
at~$q_{\texttt{g}}$ and more importantly,
a set of isolated saddle points of measure zero due to its harmonic properties.
Therefore, the path~$\widetilde{\boldsymbol{\tau}}$ is collision-free
and converges to~$q_{\texttt{g}}$ asymptotically
under the aforementioned constraint.
This completes the proof.
\end{proof}

Next, the objective is to optimize the weights~$\mathbf{w}$ such that the D-signature
of the path~$\widetilde{\boldsymbol{\tau}}$ is close to the desired signature~$D^\star$,
i.e., as a constrained optimization problem:
\begin{equation}\label{eq:w-opt}
  \mathbf{w}^\star \triangleq \underset{\mathbf{w}\in \mathbf{W}}{\textbf{min}}\;
  \big{\|}D(\widetilde{\boldsymbol{\tau}}(t,\, \mathbf{w})) - D^\star\big{\|},
\end{equation}
where~$\mathbf{W}\triangleq \{\mathbf{w}>0|w_{\texttt{g}}>\sum_{i\in \mathcal{M}} w_i + 1\}$;
and $\mathbf{w}$ is added as an explicit parameter to the path~$\widetilde{\boldsymbol{\tau}}$.
Note that the point obstacles in~$\mathcal{P}$ are fixed given the forest structure~$\mathcal{F}^\star$.
Thus, the resulting path~$\widetilde{\boldsymbol{\tau}}(t, \mathbf{w})$ is fully determined
by the weights~$\mathbf{w}$.
Thus, a \emph{projected} gradient descent
method~\cite{boyd2004convex} is adopted:
\begin{subequations}\label{eq:opt-w}
\begin{align}
  &\mathbf{w}'_{k+1} = \mathbf{w}_{k} - \Big(D(\widetilde{\boldsymbol{\tau}})
  - D^\star\Big)^\intercal\frac{\partial D(\widetilde{\boldsymbol{\tau}}(t,
    \mathbf{w}_k))}{\partial \mathbf{w}_k}\, \delta_k; \label{eq:opt-gradient}\\
  &\mathbf{w}_{k+1} = \underset{\mathbf{w}\in \mathbf{W}_{\eta}}
{\textbf{argmin}}\; \|\mathbf{w} - \mathbf{w}'_{k+1}\|^2\label{eq:opt-proj},
\end{align}
\end{subequations}
where the second term in~\eqref{eq:opt-gradient} is the Jacobian
of~$D(\widetilde{\boldsymbol{\tau}}(t, \mathbf{w}))$ with respect to $\mathbf{w}$;
$\delta_k>0$ is the step size;
and the optimization in~\eqref{eq:opt-proj} is the
Euclidean projection that maps~$\mathbf{w}'_{k+1}$ back to the feasible
region~$\mathbf{W}$ with a certain margin~$\eta>0$.

Since the resulting path~$\widetilde{\boldsymbol{\tau}}(t, \mathbf{w})$
does not have an analytical solution,
its D-signature and the Jacobian $\frac{\partial D(\widetilde{\boldsymbol{\tau}}
(t, \mathbf{w}))}{\partial \mathbf{w}} \in \mathbb{R}^{M\times (M+1)}$
in~\eqref{eq:opt-gradient} is approximated numerically by:
\begin{equation*}
  \frac{\partial\, d_i(\widetilde{\boldsymbol{\tau}}(t, \mathbf{w}))}
       {\partial\, w_j} = \frac{d_i(\widetilde{\boldsymbol{\tau}}
         (t, \mathbf{w} + \boldsymbol{\epsilon}_j))
         - d_i(\widetilde{\boldsymbol{\tau}}
         (t, \mathbf{w} - \boldsymbol{\epsilon}_j))}{2\epsilon},
\end{equation*}
where~$\boldsymbol{\epsilon}_j \triangleq
[0, \cdots, \epsilon, \cdots, 0]^\intercal$ with $\epsilon>0$ at the $j$-th index,
for $i\in \mathcal{M}$ and $j\in \{0\} \cup \mathcal{M}$,
i.e., the first column is the partial derivative for $w_{\texttt{g}}$;
$\widetilde{\boldsymbol{\tau}}(t, \mathbf{w} + \boldsymbol{\epsilon}_j)$
is the new resulting path after the weights are changed to~$\mathbf{w} + \boldsymbol{\epsilon}_j$,
which should be computed numerically, i.e., via the Runge-Kutta methods~\cite{zwillinger2021handbook};
the same applies to~$\widetilde{\boldsymbol{\tau}}(t, \mathbf{w} - \boldsymbol{\epsilon}_j)$;
and their D-signatures~$D(\cdot)$ are computed by~\eqref{eq:distance-metric}.
Lastly, the solution of~\eqref{eq:opt-proj} can be derived analytically
through the Lagrange equation and the KKT conditions~\cite{boyd2004convex},
of which the details are omitted here due to limited space.
The iteration terminates after the Jacobian is less than a given threshold
or after certain number of iterations.
As shown in Fig.~\ref{fig:gd_process},
the resulting path~$\widetilde{\boldsymbol{\tau}}(t, \mathbf{w})$
gradually converges to the desired homotopy class.

\begin{remark}\label{remark:bifurcation}
  Although the resulting path~$\widetilde{\boldsymbol{\tau}}(t, \mathbf{w})$
  is continuous w.r.t. the parameter~$\mathbf{w}$,
  the discrete change of homotopy class is related to the phenomenon of
  \emph{bifurcation} in nonlinear differential equations~\cite{zwillinger2021handbook},
  when the path intersects with the neighborhood of the saddle
  points of the potential~$\phi_{\mathcal{P}}$.
  \hfill $\blacksquare$
\end{remark}

\begin{algorithm}[t]
\caption{Customization over Homotopic Paths}
\label{alg:overall}
	\LinesNumbered
        \SetKwInOut{Input}{Input}
        \SetKwInOut{Output}{Output}
\Input{$(\mathcal{W}, \{\mathcal{O}_i,i\in \mathcal{M}\})$, $(p_{\texttt{s}}, p_{\texttt{g}})$.}
\Output{$\big\{(\varphi^\star_{\mathcal{F}},\,(\mathcal{F}^\star,\,\mathbf{w}^\star),\,\boldsymbol{\tau}^\star)\big\}$.}
Determine all forest structures~$\mathscr{F}$\;
Set~$\mathscr{S}=\emptyset$\;
\For{$\mathcal{F}\in \mathscr{F}$}{
\tcc{\textbf{Structure Selection}}
 Compute transformation~$\Phi_{\mathcal{F}\rightarrow \mathcal{P}}$
 and~$\mathcal{P}$ by~\eqref{eq:purge}\;
 Compute~$\mathscr{D}^\star$ by~\eqref{eq:all_homo}\;
 \For{$D^\star \in \mathscr{D}^\star$}{
   Compute~$\mathcal{F}^\star$ by~\eqref{eq:optimal-root}\;
   \tcc{\textbf{Weight Optimization}}
   Compute~$\Phi^\star_{\mathcal{F}\rightarrow \mathcal{P}}$
   and~$\mathcal{P}^\star$ by~\eqref{eq:purge}\;
   Optimize~$\mathbf{w}^\star$ by~\eqref{eq:opt-w}\;
   \If{$S(\widetilde{\boldsymbol{\tau}}(t, \mathbf{w}^\star))=S^\star$\label{line:cond}}{
   Compute~$\varphi^\star_{\mathcal{F}}$ and~$\boldsymbol{\tau}^\star$ by~\eqref{eq:complete-nf}\;
   Add~$(\varphi^\star_{\mathcal{F}},\,(\mathcal{F}^\star,\,\mathbf{w}^\star),\,\boldsymbol{\tau}^\star)$
   to~$\mathscr{S}$\;}
 }
}
\end{algorithm}
\vspace{-0.12in}

\subsection{Overall Framework}\label{subsec:overall}
\subsubsection{Customization over Homotopic Paths}\label{subsubsec:preference}
As summarized in Alg.~\ref{alg:overall},
all homotopy classes that can be generated
by the resulting path of valid harmonic potentials within the workspace
is determined by the proposed hybrid optimization scheme.
More specifically, given the workspace configuration
with the start and goal positions,
the set of all forest structures~$\mathscr{F}$ can be determined by iterating
through all choices of roots for each tree-of-stars.
Then, for a chosen structure~$\mathcal{F}$,
its associated transformation~$\Phi_{\mathcal{F}\rightarrow \mathcal{P}}$
and point world~$\mathcal{P}$ are computed,
of which the set of all D-signatures~$\mathscr{D}^\star$ is given by~\eqref{eq:all_homo}.
Afterwards, for each class~$D^\star$, the hybrid optimization scheme
of structure selection via~\eqref{eq:optimal-root}
and weight optimization via~\eqref{eq:opt-w} is applied
to derive the potential~$\varphi^\star_{\mathcal{F}}$,
the parameters~$(\mathcal{F}^\star,\,\mathbf{w}^\star)$,\,
and the resulting path~$\boldsymbol{\tau}^\star$,
which is stored in the set of valid solutions~$\mathscr{S}$.
Note that due to the condition at Line~\ref{line:cond},
only paths with different signs~$S(\cdot)$ in~\eqref{eq:distance-metric}
are added to~$\mathscr{S}$, i.e., they belong to different homotopy classes.

\subsubsection{Complexity Analyses}
\label{subsec:analysis}
The number of all homotopy classes is given by~$2^{M}$,
where $M$ is the number of obstacles.
For a given reference homotopy class,
the complexity to determine the
optimal forest structures is $\mathcal{O}(M N)$,
where $N$ is the maximum depth over all trees~$\{\mathcal{T}_i\}$.
Moreover, the complexity to construct the diffeomorphic transformation
given a forest structure
is~$\mathcal{O}(M^2+M N^2)$.
The parameter optimization has polynomial complexity
as both the objective and constraints are convex.
Lastly, the overall complexity to find all homotopic paths is bounded by
$\mathcal{O}(2^M(M N + M^2+ M N^2))$.


\section{Numerical Experiments} \label{sec:experiments}

\begin{figure}[t]
  \centering
  \includegraphics[width=0.95\hsize]{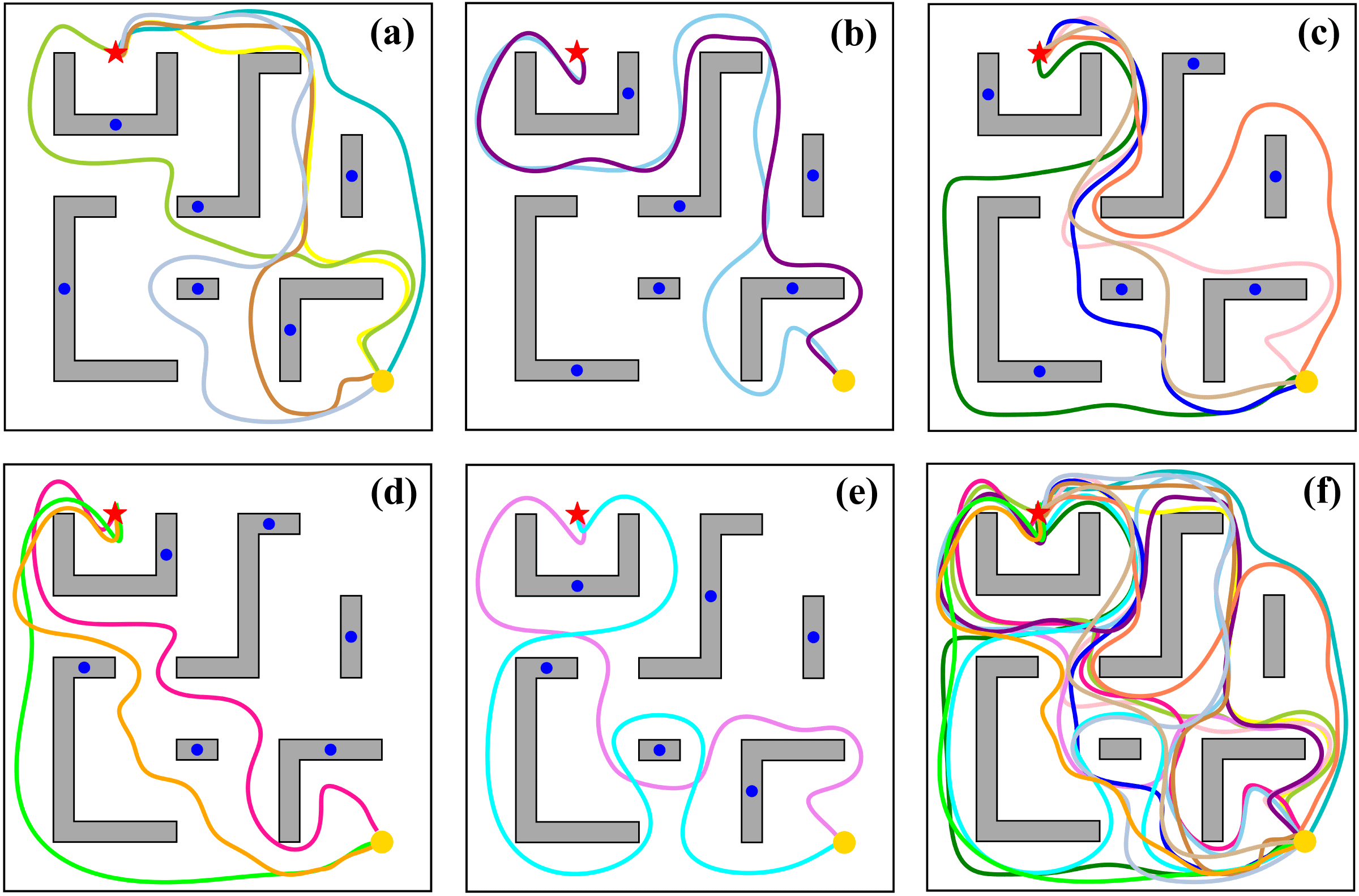}
    \vspace{-0.1in}
  \caption{
    (\textbf{a}-\textbf{e}) Resulting paths of different homotopic classes
    under different weights but the same forest structures (roots in blue).
    (\textbf{f}) paths of~$17$ different homotopic classes found
    in the polygon workspace.}\label{fig:forest_sim}
  \vspace{-0.15in}
\end{figure}


For further validation,
extensive numerical simulations are conducted.
The algorithm is implemented in \texttt{Python3}
and tested on a laptop with an Intel Core i7-1280P CPU.
Simulation videos, experiment videos and source code
  can be found in the supplementary files.

\subsection{System Description}\label{subsec:description}
Two different environments are tested:
(I) a polygon workspace of~$10m \times 10m$ as shown in Fig.~\ref{fig:forest_sim}
featuring~$6$ tree-of-squircles with a maximum depth of~$3$;
(II) an office workspace of~$15m \times 10m$ as shown in Fig.~\ref{fig:hybrid}
featuring~$10$ tree-of-squircles with walls, sofas and tables.
The weight parameter $\mathbf{w}$ in $\phi_{\mathcal{P}}(x,\,\mathbf{w})$
is initialized as $1.0$ for each obstacle~$\mathcal{O}_i$ and~$12.0$ for the goal.
The step size and the lower bounds in~\eqref{eq:opt-w} are set
as~$\delta_k=0.01$ and $\eta = 0.1$.
\subsection{Results}\label{subsec:results}
\subsubsection{Polygon workspace}
To begin with, the hybrid optimization process for a specific homotopy class is investigated.
As shown in Fig.~\ref{fig:gd_process},
the start and goal poses are set to~$(5, 1)$ and $(8, 8)$, respectively.
The desired D-signature is given by $D^\star = (0.19, -0.18, 0.22, -0.39, -0.53, -0.62)$.
To solve~\eqref{eq:optimal-root}, in total~$54$ forest structures are generated,
of which the maximum Fisher distance~$J(D^\star, \mathcal{F}^\star)=1.82$
is found within~$0.12s$ with the forest~$\mathcal{F}^\star$ shown in Fig.~\ref{fig:gd_process}.
Subsequently, the parameter~$\mathbf{w}$ is optimized via~\eqref{eq:opt-w} in $0.47s$.
The intermediate paths in the forest world and point world during different iterations are shown in Fig.~\ref{fig:gd_process},
which transits through various homotopy classes~$S(\cdot)$
from the initial sign~$(1, 1, 1, 1, 1, 1)$ to~$(1, -1, 1, -1, -1, -1)$.
It shows in Fig.~\ref{fig:gd_process} that~$D(\cdot)$ changes discontinuously
at iterations~$79$, $127$, $302$ and $732$,
as the resulting path intersects the neighborhood of saddle points.
The final value of $\mathbf{w}$ is~$(9.56, 0.765, 0.321, 0.727, 1.33, 1.45, 1.08)$.
Moreover, the set of all homotopic paths are generated by Alg.~\ref{alg:overall}.
For the start at $(2.5, 9)$ and goal at $(9, 1)$, the final results
are summarized in Fig.~\ref{fig:forest_sim}.
In total,~$64$ D-signatures are generated as the homotopy classes given by~\eqref{eq:all_homo}
For each homotopy class, the optimal forest structure $\mathcal{F}^\star$
and weight parameter $\mathbf{w}^\star$ are computed, with an average time of~$0.11s$ and $0.4s$.
Finally, a total of $17$ homotopy classes are found within $32.9s$.
It is interesting to note that the homotopic classes generated under different forest
structures are often different,
as clustered in Fig~\ref{fig:forest_sim}.
Additionally, the final harmonic potentials
and the resulting paths are shown in Fig.~\ref{fig:gd_process}.

\subsubsection{Office workspace}
The office workspace consists of four areas that include one restricted region
and three bonus regions as the user preferences, as shown in Fig.~\ref{fig:hybrid}.
Due to the larger number of obstacles,
approximately~$600$ forest structures are evaluated based on
the $2^{10}$ candidate homotopy classes in~$\mathscr{D}$ within~$75.1s$.
In total~$42$ homotopic paths are found via optimizing~$\mathbf{w}$ of dimension~$10$
within~$237.8s$, where each homotopy class takes in average $0.25s$.
Among these paths, $10$ paths that satisfy the preferences
are returned, as shown in Fig.~\ref{fig:hybrid}.
It is worth noting the potentials by
the default choice of forest structure and weights
would result in a path close to the boundary and away from all obstacles.

\subsubsection{Scalability Analysis}
The proposed method is evaluated w.r.t.
the number of trees~$M$ and their depth~$N$, as summarized in Table~\ref{table:scalability-data}.
As $M$ increases from $5$ to $10$ and $20$,
the number of potential homotopy classes would grow exponentially.
The average planning time $T_\texttt{avg}$ for each homotopy class increases
from about~$2.6s$ to~$5.2s$ and~$29.5s$,
with the total number of homotopic paths $|\mathscr{S}|$
being~$9$, $43$ and~$152$, respectively.
Moreover, as $N$ increases to~$4$,
the number of paths roughly doubles due to the increased variety of forest structures,
e.g., $276$ paths for $M=20$ and $N=4$.

 \begin{figure}[t]
   \centering
   \includegraphics[width=0.95\hsize]{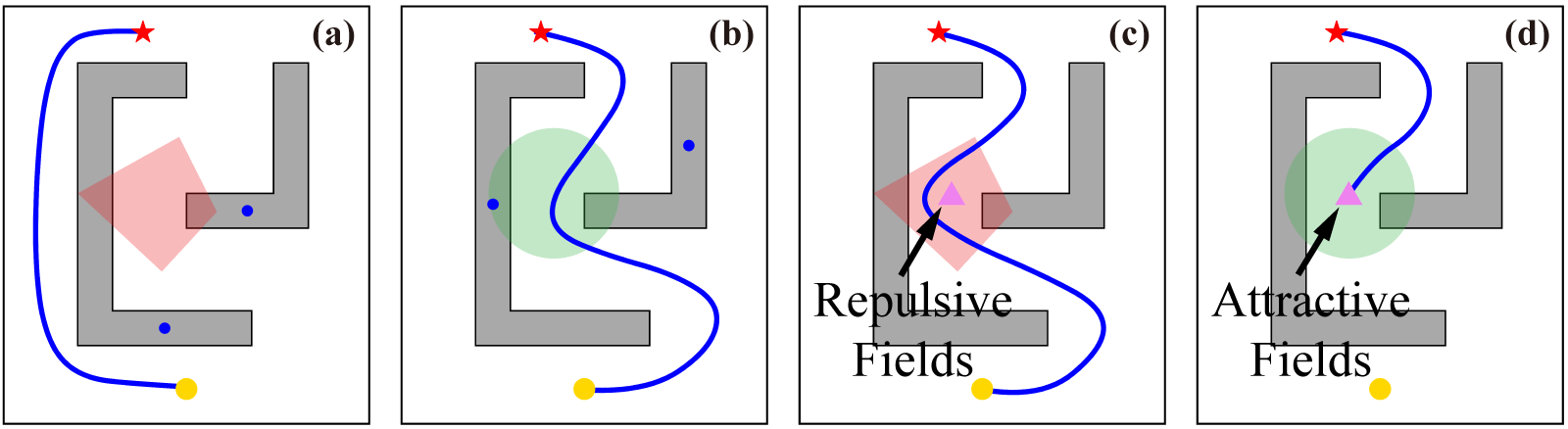}
   \vspace{-0.1in}
   \caption{
   Comparison between the proposed hybrid framework and basic methods that
   apply attractive or repulsive fields directly to regions of interest.
   }\label{fig:compare}
   \vspace{-0.1in}
 \end{figure}

\begin{table}[t]
 \begin{center}
 \begin{threeparttable}
   \caption{Scalability Analysis}\label{table:scalability-data}
   \setlength{\tabcolsep}{0.95\tabcolsep}
   \centering
   \renewcommand{\arraystretch}{1.1}
   \begin{tabular}{c|c c | c c | c c}
     \toprule
     \makecell{$(M, N)$} & (5, 2) & (5, 4) & (10, 2) & (10, 4) & (20, 2) & (20, 4)\\
     \midrule
     \makecell{$|\mathscr{S}|$} & 9 & 17 & 43 & 72 & 152 & 276 \\
     \hline
     \makecell{$T_\texttt{avg}$ [s]} & 2.6 & 1.9 & 5.2 & 3.7 & 29.5 & 21.2 \\
     \bottomrule
   \end{tabular}
 \end{threeparttable}
\end{center}
    \vspace{-0.3in}
\end{table}

\subsection{Comparisons}\label{subsec:comparisons}

 The proposed framework (as \textbf{Ours}) is compared against \emph{five} baselines:
(I) \textbf{Exhaustive}, which enumerates all combinations of weights and forest structures;
(II) \textbf{RRT} from~\cite{lavalle2006planning}, which samples from all possible paths;
(III) \textbf{HM} from~\cite{vlantis2018robot}, which partitions the workspace into $6$ subsets
and selects different intermediate goals;
(IV) \textbf{ReRoot}, which optimizes the forest structure only with the default weights;
(V) \textbf{ParamOpt}, which optimizes weights only with the default forest structure.
A maximum computation time of~$100s$ applies to all methods.
As summarized in Table~\ref{table:comparison-data},
the metrics for comparison include the number of homotopic paths~ $|\mathscr{S}|$,
the total planning time~$T_\texttt{tol}$ to find all homotopy classes,
and the planning time~$T_\texttt{cus}$ to find a specific homotopy class.

Notably, for the same polygon workspace as above,
both \textbf{Exhaustive} and \textbf{RRT} reach the maximum planing time,
yielding $11$ and $13$ paths, respectively,
and requiring approximately $44s$ and $38s$ for a given class.
In contrast, \textbf{HM} constructs the harmonic potential within~$25s$,
yielding $8$ classes via different goal selections,
and an average of $8s$ for a feasible class.
Moreover, \textbf{ReRoot} finds $6$ paths in approximately~$8s$
and the selection of the optimal forest structure only for a given class takes $3.5s$,
while \textbf{ParamOpt} identifies $8$ classes within~$26s$,
taking $8.4s$ to optimize the weights only.
In contrast, our method generates up to $17$ distinct homotopy classes in $32.9s$,
validating the effectiveness of the proposed hybrid optimization scheme,
with around~$5s$ to derive the complete potentials for a desired  class.

The proposed method demonstrates superior performance compared to simple potential-based methods
that apply attractive or repulsive fields directly within regions of interest, as illustrated in Fig.~\ref{fig:compare}.
The repulsive-field approach fails to prevent detours through the avoidance region (Fig.~\ref{fig:compare}(c)),
and the attractive-field approach is attracted to local minima (Fig.~\ref{fig:compare}(d)).
In contrast, the proposed strategy successfully avoids the avoidance region (Fig.~\ref{fig:compare}(a))
and traverses the bonus region while maintaining global convergence (Fig.~\ref{fig:compare}(b)).

 \begin{figure}[t]
   \centering
   \includegraphics[width=0.95\hsize]{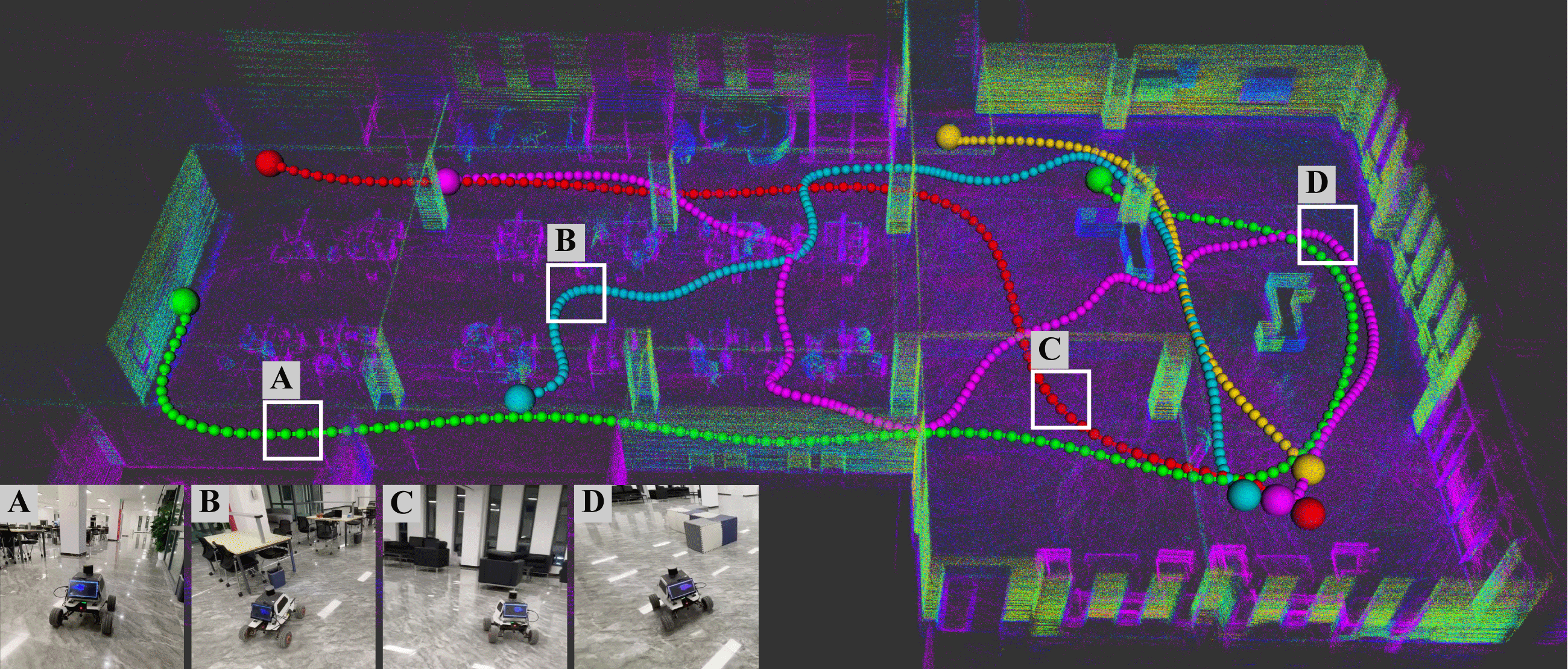}
   \vspace{-0.1in}
   \caption{Final robot trajectories with different homotopy properties,
     given the harmonic potentials found for the
     office environment.}\label{fig:hardware}
   \vspace{-0.1in}
 \end{figure}

\begin{table}[t]
 \begin{center}
 \begin{threeparttable}
   \caption{Comparisons with Baselines}\label{table:comparison-data}
   \vspace{-0.05in}
   \setlength{\tabcolsep}{0.7\tabcolsep}
   \centering
   \renewcommand{\arraystretch}{1.1}
   \begin{tabular}{c|c c c c c c}
     \toprule
     \makecell{Methods} & \textbf{Ours} & \textbf{Exhaustive} & \textbf{RRT} & \textbf{HM} & \textbf{ReRoot} & \textbf{ParamOpt} \\
     \midrule
     \makecell{$|\mathscr{S}|$} & 17 & {11} & {13} & {8} & {6} & {8}\\
     \makecell{$T_\texttt{tol}$ [s]} & 32.9 & {100} & {100} & {25.2} & {7.9} & {26.1} \\
     \makecell{$T_\texttt{cus}$ [s]} & 5.2 & {43.5} & {38.3} & {7.6} & {3.5} & {8.4} \\
     \bottomrule
   \end{tabular}
 \end{threeparttable}
\end{center}
    \vspace{-0.2in}
 \end{table}

\subsection{Hardware Experiments}\label{subsec:experiments}
The proposed method is deployed to a differential-driven robot
operating within an office environment of $40m \times 30m$.
The robot with a radius of $0.35m$ is controlled using the
nonlinear tracking controller from~\cite{wang2025hybrid},
with control gains $k_\upsilon=0.2$ and $k_\omega = 0.5$.
As shown Fig.~\ref{fig:hybrid} and~\ref{fig:hardware},
the robot is equipped with a $360^\circ$ Lidar sensor with a range of $10m$
to create a point cloud map of the environment using SLAM technology
for robot localization.
Based on the accumulated point clouds,
a total of $24$ cluttered obstacles are modeled to construct harmonic potential fields.
Five experimental trials are conducted with distinct user-defined homotopy references.
In each trial, the robot is initialized with random start and goal positions.
Then, harmonic potentials are generated via the proposed framework
given the desired homotopic class for each start-goal configuration,
yielding the optimal $(\mathcal{F}^\star, \mathbf{w}^\star)$ within an average of~$6.7s$.
Lastly, the robot is controlled by tracking the
negated gradient of the associated harmonic fields,
and the resulting trajectories are shown in Fig.~\ref{fig:hardware}.

\section{Conclusion} \label{sec:conclusion}
  This work presents a systematic framework for customizing harmonic potential fields
  to generate paths with desired homotopic properties,
offering a novel solution to task-driven path and motion planning in complex environments.
Via simultaneously optimizing the forest structure and the weighting parameters,
it provides a flexible and customizable planning scheme
for practical robotic applications such as service robots,
search and rescue operations.
Future work involves online optimization within unknown scenes.


\bibliographystyle{IEEEtran}
\bibliography{contents/references}

\end{document}